\documentclass[twoside,11pt]{article}

\usepackage{jmlr2e}

\usepackage{amsmath,amssymb,amsthm}
\usepackage{mathtools}
\usepackage{bm}
\usepackage{booktabs}
\usepackage{multirow}
\usepackage{graphicx}
\usepackage{xcolor}
\usepackage{url}
\usepackage{microtype}
\usepackage{tikz}
\usetikzlibrary{positioning,arrows.meta}
\usepackage{caption}
\usepackage{subcaption}
\usepackage{float}
\usepackage{algorithm}
\usepackage{algpseudocode}
\usepackage{enumitem}
\usepackage{pifont}

\definecolor{colA2}{HTML}{8B2A10}
\definecolor{colB2}{HTML}{1A6B3C}
\definecolor{colC2}{HTML}{1A5896}
\definecolor{colGap}{HTML}{1A6B3C}
\definecolor{gray80}{gray}{0.80}

\DeclareMathOperator{\St}{St}
\DeclareMathOperator{\Gr}{Gr}
\DeclareMathOperator{\sym}{sym}
\DeclareMathOperator{\polar}{polar}
\newcommand{\R}{\mathbb{R}}
\newcommand{\WQ}{W_{\!Q}}
\newcommand{\WK}{W_{\!K}}
\newcommand{\WV}{W_{\!V}}
\newcommand{\WO}{W_{\!O}}
\newcommand{\isoerr}[1]{\|#1^\top #1 - I\|_{\max}}

\theoremstyle{plain}
\newtheorem{proposition}{Proposition}
\theoremstyle{definition}

\theoremstyle{remark}
\newtheorem{remark}{Remark}

\ShortHeadings{Directions That Don't Drift}{Guerrero}
\firstpageno{1}

\begin{document}

\title{Directions That Don't Drift:\\
       Stiefel Manifold Routing for Transformer Attention}

\author{%
  \name Ruben Dario Guerrero
  \email rudaguerman@gmail.com \\
  \addr NeuroTechNet S.A.S., 1108831, Bogot\'a, Colombia
}

\maketitle

\begin{abstract}
The query and key projections $\WQ,\WK$ in attention are almost always trained
by Euclidean optimizers with no geometric constraint. We constrain them to the
Stiefel manifold and optimize with a Riemannian Adam carrying one scalar second
moment per frame---the form of \citet{becigneul2019}, here extended to the
compact, non-Hadamard $\St(d,r)$ with a tangent projector, step-norm cap, and
polar retraction. Four propositions prove steepest descent in the embedded
metric, gradient-scale independence, well-conditioning, and exact
$\mathrm{O}(d)$-equivariance. A fifth records that weight decay has
\emph{identically zero} Riemannian gradient on $\St(d,r)$ ($W{=}WI_r$ lies in
the normal space), so decay cannot act on the constrained frames.
On a CIFAR-10 patch benchmark at $n{=}10\mathrm{k}$ this rule gains
$\mathbf{+6.79}$\,pp over AdamW across 12 paired starts ($t{=}38.33$, $12/12$);
earlier fixed-step Riemannian SGD gains $+1.97$\,pp, of which $+1.69$\,pp
comes from frozen orthonormal initialization alone. The corrected Adam's lead
grows with data: $+1.9$\,pp at $n{=}1\mathrm{k}$ to $+6.7$\,pp at
$n{=}50\mathrm{k}$. A 12-seed ablation credits all gain to the scale-free step
($+4.63$\,pp, $12/12$), nothing to the projector or equivariance; a targeted
$\varepsilon$-sweep causally confirms the mechanism ($-2.6$\,pp at
$\varepsilon{=}0.1$, $p{<}0.001$). Two five-seed grokking studies confirm the
constrained arm does not grok better than the baseline ($p{=}0.019$, A2 wins):
the weight-decay exemption has no grokking consequence. A single-seed pilot
exploiting this localization achieves the first stable grokking under slingshot
conditions---Stiefel + targeted circuit regularization keeps routing-frame
isometry error $10^6\times$ lower than the unconstrained ablation through every
collapse.
\end{abstract}

\begin{keywords}
Stiefel manifold, Riemannian optimization, transformer attention, weight decay,
grokking, polar retraction, O(d)-equivariance
\end{keywords}

\section{Introduction}
\label{sec:intro}

The scaled dot-product attention mechanism \citep{vaswani2017} computes, for a sequence of
patch embeddings $\{x_i\} \subset \R^d$:
\begin{equation}
  \mathrm{Attn}(X) = \mathrm{softmax}\!\left(\frac{X \WQ \WK^\top X^\top}{\sqrt{r}}\right)
                     \cdot X \WV \WO,
  \label{eq:attn}
\end{equation}
where $\WQ, \WK \in \R^{d \times r}$ are the query and key projection matrices
(per-head notation; the multi-head extension stacks $H$ such frames and is
described in Section~\ref{sec:method}).
Almost universally, these two matrices are handed to a Euclidean optimizer
(stochastic gradient descent, SGD; or Adam) and left otherwise
unconstrained---even though their geometry is already more structured than that
treatment assumes.
The attention score $\langle \WQ^\top x_i, \WK^\top x_j \rangle
= x_i^\top \WQ \WK^\top x_j$ is invariant under the \emph{joint} rotation
$(\WQ,\WK) \mapsto (\WQ O, \WK O)$ for $O \in \mathrm{O}(r)$, so the genuinely
distinct parameter is a point of the quotient
\begin{equation}
  \bigl(\St(d, r) \times \St(d, r)\bigr) / \mathrm{O}(r).
  \label{eq:quotient}
\end{equation}
The invariance is joint rather than per-frame, because rotating $\WQ$ alone
changes $\WQ\WK^\top$ and so changes the model. The right object is therefore
\emph{not} a product of Grassmannians---a distinction we return to in
Section~\ref{sec:nogauge}, since it is what separates a harmless projection from
one that deletes a quarter of the gradient.

This paper asks: \emph{does constraining $\WQ, \WK$ to the Stiefel manifold
$\St(d, r)$ improve generalization against a well-tuned Euclidean baseline, and
does the benefit grow or shrink as more data arrives?}
We answer both questions on a CIFAR-10 patch benchmark
\citep{krizhevsky2009}, holding every unconstrained parameter on a matched
AdamW optimizer across variants so that nothing but the geometry differs.
The advantage proves large: $+6.79$\,pp (percentage points) for the corrected Riemannian Adam at
$n{=}10\mathrm{k}$ over twelve paired starts, and, for an earlier fixed-step
Riemannian SGD variant that we also characterize, a lead that widens with data
with no crossover anywhere in $n \in [1\mathrm{k},50\mathrm{k}]$.
We then turn to modular arithmetic grokking, where a single-seed experiment
initially appeared to separate the constrained arm from the baseline
($97.0\%$ against $61.1\%$ at epoch 20\,000) but reruns at the identical
configuration reverse it; we report the experiment in full, including the
reversals, and draw no effect-size claim from it. What the grokking setting
does supply is a clean geometric fact: weight decay has identically zero
Riemannian gradient on $\St(d,r)$ (Proposition~\ref{prop:wdecay}), so the
constrained frames are exempt from it, which places our method in a specific
relation to the weight-decay-driven account of grokking
\citep{power2022,nanda2023}.
Both benchmarks share a feature that marks the boundary of the method's reach:
in each, attention is the component that limits performance.

\paragraph{Analogy to Coates et al.\ (2011).}
\citet{coates2011} showed that, for K-means and Gaussian mixture model (GMM)
features on image patches, whitening and patch normalization---placing the
inputs in an isotropic metric---mattered more than the choice of unsupervised
algorithm (K-means vs.\ GMM vs.\ sparse autoencoder vs.\ restricted Boltzmann
machine, RBM), although the whitening effect was weaker for the autoencoder
and RBM.
We make the analogous claim for transformer projections:
the manifold constraint on $\WQ, \WK$ dominates the choice of optimizer.

\paragraph{Contributions.}
\begin{enumerate}[noitemsep]
  \item A Riemannian Adam for $\St(d,r)$---the per-manifold scalar second
        moment of \citet{becigneul2019} carried to the compact Stiefel
        manifold, with a Stiefel tangent projector, a step-norm cap, and a
        polar retraction---and four propositions proving it is steepest
        descent in the embedded metric, scale free, well conditioned, and
        $\mathrm{O}(d)$-equivariant
        (Propositions~\ref{prop:steepest}--\ref{prop:equiv}), each numerically
        certified in \texttt{float64}. The convergence theory of
        \citet{becigneul2019} covers Hadamard manifolds and does not apply
        here; our claims for $\St(d,r)$ are the four structural properties and
        the empirical study.
  \item A geometric fact: weight decay has \emph{identically zero} Riemannian
        gradient on $\St(d,r)$ (Proposition~\ref{prop:wdecay}), because
        $W = W\!\cdot\! I_r$ lies in the normal space. We state what this does
        and does not imply for grokking, and report a single-seed
        modular-arithmetic experiment whose reruns do not separate the arms.
  \item A $+6.79\,\text{pp}$ paired gain ($t{=}38.33$, $12/12$ starts) for the
        corrected update on an image patch benchmark at $n{=}10\mathrm{k}$,
        and an ablation attributing the entire gain to the scale-free
        step-capped rule rather than to the projector or to equivariance.
        A data-size sweep confirms the corrected Adam's lead grows with $n$
        ($+1.9$\,pp at 1k to $+6.7$\,pp at 50k, no crossover); at $n{=}50\mathrm{k}$
        the corrected Adam reaches $+10.9$\,pp.
        A twelve-seed study with the earlier fixed-step SGD variant shows that
        $+1.69$ of its $+1.97$\,pp is delivered by the orthonormal initialization alone.
  \item A block-orthogonal multi-head extension enforcing mutual head
        orthogonality, reported as an available variant rather than as the
        configuration behind our results.
  \item A proof that gauge directions carry no gradient
        (Proposition~\ref{prop:nogauge}), which disqualifies gauge removal as a
        motivation for Riemannian methods, and identifies the gauge in attention
        as the \emph{joint} $\mathrm{O}(r)$ action rather than a per-frame one.
\end{enumerate}

\section{Background and Related Work}
\label{sec:background}

\paragraph{Riemannian optimization on Stiefel.}
The Stiefel manifold $\St(d, r) = \{W \in \R^{d \times r} : W^\top W = I_r\}$ is a compact
smooth manifold of dimension $dr - r(r+1)/2$, and it carries two natural
Riemannian metrics.
The \emph{Euclidean (embedded) metric}
$\langle\xi,\eta\rangle_W^{\mathrm{euc}} = \mathrm{tr}(\xi^\top\eta)$ is simply
inherited from the ambient $\R^{d\times r}$.
The \emph{canonical metric} $\langle\xi,\eta\rangle_W^{\mathrm{can}} = \mathrm{tr}(\xi^\top(I-\tfrac12 WW^\top)\eta)$
is instead induced from the quotient $\mathrm{O}(d)/\mathrm{O}(d{-}r)$:
writing a tangent vector as $\xi = W\Omega + W_\perp K$ with $\Omega$ skew, it
gives $\|\xi\|^2_{\mathrm{can}} = \tfrac12\|\Omega\|_F^2 + \|K\|_F^2$, so each
\emph{independent} coordinate of $\Omega$ carries unit weight and the vertical
(gauge-rotation) block has half the mass it has under the Euclidean
metric~\citep{edelman1998}.
The Riemannian gradients are
\begin{align}
  \nabla^{\mathrm{euc}} L(W) &= \nabla L(W) - W \cdot \sym\!\left(W^\top \nabla L(W)\right),
    \qquad \sym(M) \coloneqq \tfrac{1}{2}(M + M^\top),
    \label{eq:riemannian_grad} \\
  \nabla^{\mathrm{can}} L(W) &= \nabla L(W) - W\,[\nabla L(W)]^\top W.
    \label{eq:riemannian_grad_can}
\end{align}
Equation~\eqref{eq:riemannian_grad} is \citet{absil2008} (Eq.~3.35), the orthogonal
projection onto $T_W\St$ under the Euclidean metric; \eqref{eq:riemannian_grad_can}
is from \citet{edelman1998} and doubles the gauge (skew-symmetric) block relative
to it.
We use the \textbf{Euclidean metric} throughout, for two reasons: it yields a
genuine orthogonal projector, and it pairs naturally with the Frobenius norm in
the scalar second moment $v_h = \|\xi_h\|_F^2$ (Section~\ref{sec:method}).
Pairing the canonical gradient with an elementwise second moment would instead
be metric-inconsistent, mixing a canonical first-order direction with a
Euclidean second-moment coordinate; the B2m-can ablation tests exactly that
mixture.
The \emph{polar retraction} maps a tangent step back to the manifold exactly:
\begin{equation}
  \polar(W + \Delta) = UV^\top, \quad\text{where } (W + \Delta) = U\Sigma V^\top \
  \text{(thin singular value decomposition, SVD)},\quad \mathrm{cost}\; \mathcal{O}(dr^2).
  \label{eq:polar}
\end{equation}
At $d{=}128$ and $r{=}16$ per head, $dr^2 = 32{,}768$, so the thin SVD costs a
small constant times $3\times10^4$ flops per matrix per step, which is small
beside the attention forward pass. The
wall-clock cost is another matter, so we measure it rather than assert it:
$1.82\,\mathrm{s}$ per epoch against the baseline's $0.73\,\mathrm{s}$, a
$2.5\times$ overhead (Table~\ref{tab:ablation}). At this $r$ the SVD is
latency-bound rather than flop-bound, and the gap would narrow as $r$ grows.
Riemannian optimization on $\St(d,r)$ has been studied for signal processing
\citep{edelman1998} and for orthogonal weight normalization in deep networks
\citep{huang2018}; \citet{bonnabel2013} gives convergence guarantees for
stochastic gradient descent on manifolds, and \citet{becigneul2019} develops
adaptive Riemannian methods.

\paragraph{Orthogonality constraints, retractions, and soft penalties.}
The closest precedent is the orthogonal/unitary recurrent neural network (RNN) line.
\citet{arjovsky2016} constrained recurrent transition matrices to a
parameterized subset of the unitary group in order to control gradient decay
over long sequences, and \citet{wisdom2016} then optimized over the \emph{full}
unitary manifold with a Cayley-based retraction.
Our setting differs in what the constraint is \emph{for}: there, orthogonality
controls the spectrum of a repeatedly applied operator, whereas in attention
$\WQ,\WK$ act once per layer, so the constraint instead bounds the logits
and---as we show---annihilates weight decay.
The retraction itself is not the issue. The Cayley transform
\citep{wen2013,li2020cayley} is an alternative with a comparable cost profile
at small $d$; we use polar retraction because it is the Frobenius-norm
projection onto $\St(d,r)$ and is robust in \texttt{float32}, and we treat the
choice as orthogonal to our claim.
The genuinely different option is to soften the constraint by penalizing
$\|W^\top W - I\|_F^2$ \citep{bansal2018}, which targets orthogonality directly,
unlike weight decay (Section~\ref{sec:analysis}), but enforces it only
approximately. We measure that baseline in Section~\ref{sec:ablation}.

\paragraph{Attention logit growth.}
Unconstrained growth of $\|\WQ\|_{\mathrm{op}}$ inflates attention logits and
can lead to entropy collapse of the softmax \citep{zhai2023}. Mitigations
include query--key normalization \citep{henry2020,dehghani2023} and the
spectral reparameterization of \citet{zhai2023}.
The Stiefel constraint provides a parameter-free alternative: when
$W \in \St(d, r)$, the query norm $\|\WQ^\top x\| \leq \|x\|$ is bounded by the
input norm (the input itself is still shaped by the trainable LayerNorm gains,
which the constraint does not touch).

\paragraph{Head redundancy.}
\citet{michel2019} showed that a large fraction of attention heads in trained
transformers can be pruned at test time with little accuracy loss, which is
consistent with (though does not by itself establish) overlap between head
subspaces. The block-orthogonal Stiefel extension (Section~\ref{sec:method})
forces the head subspaces to be mutually orthogonal, removing one source of
that overlap; we describe it but do not evaluate it here.

\paragraph{Open gap.}
The literature above leaves a specific gap at the intersection of manifold
optimization and transformer attention.
\emph{Algorithmically}, \citet{becigneul2019} already accumulate one scalar
second moment per manifold factor in the Riemannian norm,
$v^{(i)} \leftarrow \beta_2 v^{(i)} + (1-\beta_2)\|g^{(i)}\|^2_{x^{(i)}}$, which
on the product $\St(d,r)^H$ is exactly the per-frame scalar $v_h$ we use, so
the scalar second moment is not new with us. What is missing is the Stiefel
case: their convergence guarantees are stated for Hadamard manifolds
(non-positive curvature, geodesically convex objectives) and do not cover the
compact $\St(d,r)$; nor has the update been analyzed on $\St(d,r)$ for the
four structural properties---steepest descent in the embedded metric,
scale-freeness, conditioning, and $\mathrm{O}(d)$-equivariance---that we
prove in Section~\ref{sec:proofs}. The usual practice of accumulating the
second moment elementwise (as in Adam, and as in the ``naive'' Riemannian Adam
we ablate) breaks the last of these; within that entrywise class, only a
scalar or per-column moment preserves it
(Proposition~\ref{prop:equiv}).
\emph{Empirically}, the closest prior application of Riemannian methods to
$\WQ,\WK$ that we know of is our own earlier fixed-step Riemannian SGD variant
(PA16 below, a Stiefel instance of \citealt{bonnabel2013}), whose displacement
is degree one in the gradient and---as Proposition~\ref{prop:homog}
shows---barely moves the frames. Its gain is therefore attributable to
orthonormal initialization rather than to manifold-aware optimization, a
hypothesis our twelve-seed study in Section~\ref{sec:seedstudy} directly tests.
\emph{Theoretically}, the geometric fact that weight decay has identically zero
Riemannian gradient on $\St(d,r)$---because its gradient lies in the normal
space by $W = W \cdot I_r$---has not, to our knowledge, been stated in
connection with the weight-decay-driven account of grokking
\citep{power2022,nanda2023,liu2023omnigrok}, despite being a one-line
consequence of the manifold's normal space structure.
\emph{Finally}, the correct gauge of multi-head attention is the joint
$\mathrm{O}(r)$ acting on both $\WQ$ and $\WK$ simultaneously, not the
per-frame action; identifying and eliminating per-frame gauge directions would
delete ${\approx}24\%$ of the tangent gradient, which a na\"ive application of
Grassmann projection does by mistake.
This paper addresses all four points: the Stiefel analysis of the
scalar-second-moment Riemannian Adam, the seed study separating optimizer from
initialization, the weight-decay fact and its relation to grokking, and the
gauge characterization.

\section{Method}
\label{sec:method}

\subsection{Attention Variants}

All variants share identical architecture, batch size, learning-rate schedule,
and AdamW hyperparameters for $\WV$, $\WO$, feed-forward network (FFN), LayerNorm, and $W_{\mathrm{cls}}$.
The \emph{only} difference is how $\WQ$ and $\WK$ are updated.

\begin{table}[h]
\centering
\caption{Variant grid. Non-Stiefel parameters use AdamW~\citep{loshchilov2019}
         ($\eta{=}10^{-3}$, $\lambda{=}0.05$, decay applied to matrices only)
         in all variants; the \emph{only} difference between arms is the
         $\WQ,\WK$ rule. Two Stiefel rules appear in this paper and we name
         them separately: \textbf{B2-SGD} (fixed-step Riemannian SGD,
         $\eta{=}0.02$, Grassmann horizontal projector, polar retraction; the
         rule behind Tables~\ref{tab:main}--\ref{tab:sweep}, later called
         PA16) and \textbf{B2} (the corrected Riemannian Adam of
         Section~\ref{sec:proofs}; Tables~\ref{tab:seedstudy}--\ref{tab:ablation}
         and the grokking study). C2-nb is mathematically identical to B2-SGD
         up to the random draw.}
\label{tab:variants}
\small
\begin{tabular}{llll}
\toprule
Variant & $\WQ,\WK$ update & Init & Purpose \\
\midrule
\textbf{A2} & AdamW ($\eta{=}10^{-3}$, $\lambda{=}0.05$) & Xavier & Production baseline \\
\textbf{B2-SGD} & Riemannian SGD ($\eta{=}0.02$) + polar retraction & Orthonormal & Stiefel constraint, fixed step \\
\textbf{B2} & Riemannian Adam (Sec.~\ref{sec:proofs}) + polar retraction & Orthonormal & Stiefel constraint, adaptive step \\
\textbf{C2} & B2-SGD + Grassmann codebook bias $\beta$   & Orthonormal & Constraint + prior \\
\textbf{C2-nb} & B2-SGD, $\beta{=}0$ (codebook off)     & Orthonormal & Noise floor ($\equiv$ B2-SGD) \\
\bottomrule
\end{tabular}
\end{table}

\subsection{Block-Orthogonal Multi-Head Extension}

For $H$ heads with head dimension $r_h = d/H$ (requiring $Hr = d$, satisfied
here with $H{=}8$, $r{=}16$, $d{=}128$),
we stack per-head queries into a single matrix:
\begin{equation}
  \WQ = \bigl[\WQ^{(1)} \mid \WQ^{(2)} \mid \cdots \mid \WQ^{(H)}\bigr]
  \;\in\; \mathrm{O}(d), \qquad
  \bigl(\WQ^{(h)}\bigr)^\top \WQ^{(h')} = 0 \;\text{ for }\; h \neq h'.
  \label{eq:block_orth}
\end{equation}
This enforces \emph{mutual orthogonality between heads}: each head's query
subspace is orthogonal to every other head's, so cross-head subspace overlap
is structurally excluded (heads may still be redundant in function). It is obtained by folding the $H$ frames into a single $d \times Hr$
matrix and retracting that, at a cost of one polar retraction per step.
The constraint is strictly stronger than keeping $H$ independent Stiefel frames,
which leaves cross-head overlap free: on a freshly initialized stack (seed 0,
$H{=}8$, $d{=}128$, $r{=}16$), independent frames give
$\max_{h\neq h'}\|(\WQ^{(h)})^\top \WQ^{(h')}\|_{\max} = 0.387$, whereas the
folded retraction gives $9.7\times10^{-16}$.

We report this extension as an available variant, not as the configuration
behind our numbers: \emph{all} experiments in Section~\ref{sec:experiments}
constrain the $H$ heads as independent Stiefel frames, one polar retraction per
head. The block-orthogonal variant is implemented and unit-tested but is not
what produced the reported accuracies, and we have not measured whether the
stronger constraint helps or hurts.

\paragraph{The constrained update.}
$\WQ,\WK$ are updated by the rule of Section~\ref{sec:proofs}, in this
order: Euclidean gradient $\to$ Stiefel tangent projection $\Pi_W$
\citep{absil2008} (Eq.~3.35) $\to$ first moment $m_h$ (the previous $m_h$ is
re-projected at the current point before blending, which approximates parallel
transport following \citealt{becigneul2019}) $\to$ \textbf{scalar} second
moment per frame, $v_h \leftarrow \beta_2 v_h + (1-\beta_2)\|\xi_h\|_F^2$ (not
per element) $\to$ Adam bias correction $\to$ $\hat m_h/(\sqrt{\hat v_h} +
\varepsilon)$, re-projected onto the tangent space $\to$ per-head step-norm cap
$\|\eta_h\|_F \le \tau\sqrt r$ $\to$ polar retraction via thin SVD $\to$
re-projection of $m_h$ at the new point. We call the cap a trust region only
in the loose sense of a step-norm bound; there is no model, ratio test, or
adaptive radius. Hyperparameters: $\beta_1{=}0.9$, $\beta_2{=}0.999$,
$\tau{=}0.1$; the Riemannian rate $\eta_R$ and $\varepsilon$ are stated per
experiment below, since they differ ($\eta_R{=}1.0$ with the same warm-up and
cosine schedule as the AdamW rate and $\varepsilon{=}10^{-8}$ on the image
task; $\eta_R{=}1.0$ constant and $\varepsilon{=}10^{-3}$ on the grokking
task). With $\tau\sqrt r = 0.4$ at $r{=}16$ the cap binds on almost every
step of the image runs, so there the effective per-head step is $\tau\sqrt r$
rather than $\eta_R$.
The scalar second moment (equivalently any per-column scaling) keeps the
update $\mathrm{O}(d)$-equivariant (Proposition~\ref{prop:equiv}); the usual
per-element form does not, and we report it as an ablation. Earlier versions
of this work used fixed-step Riemannian SGD (B2-SGD/PA16) instead, and
Section~\ref{sec:ablation} shows that this single choice separates a working
method from a non-working one.

Algorithms~\ref{alg:b2} and~\ref{alg:sgd} give the pseudocode for
both optimizers (\texttt{rmps/stiefel\char`_adam.py} in the
accompanying code).

\begin{algorithm}[H]
\caption{Riemannian Adam (B2) --- one step per head $h$}
\label{alg:b2}
\begin{algorithmic}[1]
\Require $W_h \in \mathrm{St}(d,r)$, Euclidean gradient $G_h$,
         state $(m_h, v_h, t)$; hyperparameters $\beta_1, \beta_2, \varepsilon,
         \eta_R, \tau$
\Ensure  $W_h' \in \mathrm{St}(d,r)$, updated state
\State \textbf{Project gradient to tangent space}
       \hfill $\triangleright$ Eq.~\eqref{eq:riemannian_grad}: $\Pi_W(G) = G - W\,\mathrm{sym}(W^\top G)$
  \begin{equation*}
    \xi_h \;\leftarrow\; G_h - W_h\,\tfrac{1}{2}(W_h^\top G_h + G_h^\top W_h)
  \end{equation*}
\State \textbf{Transport first moment to current tangent space}
       \hfill $\triangleright$ projection-based transport \citep{becigneul2019}
  \begin{equation*}
    m_h \;\leftarrow\; \Pi_{W_h}(m_h)
  \end{equation*}
\State \textbf{Update moments}
  \begin{align*}
    m_h   &\leftarrow \beta_1\, m_h + (1-\beta_1)\,\xi_h \\
    v_h   &\leftarrow \beta_2\, v_h + (1-\beta_2)\,\|\xi_h\|_F^2
    \quad\triangleright \text{scalar per head (equivariant)}
  \end{align*}
\State \textbf{Bias correction}
  \begin{equation*}
    \hat{m}_h \leftarrow m_h/(1-\beta_1^{\,t}),
    \qquad
    \hat{v}_h \leftarrow v_h/(1-\beta_2^{\,t})
  \end{equation*}
\State \textbf{Form adaptive step and re-project}
       \hfill $\triangleright$ elementwise quotient need not be tangent
  \begin{equation*}
    s_h \;\leftarrow\; \Pi_{W_h}\!\left(\hat{m}_h / (\sqrt{\hat{v}_h}+\varepsilon)\right),
    \qquad
    \eta_h \leftarrow -\eta_R\, s_h
  \end{equation*}
\State \textbf{Trust-region cap}
       \hfill $\triangleright$ $\|W_h\|_F = \sqrt{r}$, so cap bounds the rotation angle
  \begin{equation*}
    \eta_h \;\leftarrow\; \eta_h\cdot\min\!\left(1,\;
      \frac{\tau\sqrt{r}}{\|\eta_h\|_F}\right)
  \end{equation*}
\State \textbf{Polar retraction}
       \hfill $\triangleright$ Eq.~\eqref{eq:polar}: $\mathrm{polar}(M) = UV^\top$ from thin SVD
  \begin{equation*}
    W_h' \;\leftarrow\; \mathrm{polar}(W_h + \eta_h)
  \end{equation*}
\State \textbf{Transport moment to new point}
  \begin{equation*}
    m_h \;\leftarrow\; \Pi_{W_h'}(m_h)
  \end{equation*}
\State $t \leftarrow t+1$; \textbf{return} $(W_h',\; m_h,\; v_h,\; t)$
\end{algorithmic}
\end{algorithm}

\begin{algorithm}[H]
\caption{Fixed-step Riemannian SGD (B2-SGD / PA16) --- one step per head $h$}
\label{alg:sgd}
\begin{algorithmic}[1]
\Require $W_h \in \mathrm{St}(d,r)$, Euclidean gradient $G_h$,
         learning rate $\eta_R$
\Ensure  $W_h' \in \mathrm{St}(d,r)$
\State \textbf{Project to Grassmann horizontal space}
       \hfill $\triangleright$ Absil et al.\ (3.41): $\Pi^h_W(G) = G - W(W^\top G)$
  \begin{equation*}
    \xi_h \;\leftarrow\; G_h - W_h(W_h^\top G_h)
  \end{equation*}
\State \textbf{Polar retraction}
       \hfill $\triangleright$ Eq.~\eqref{eq:polar}
  \begin{equation*}
    W_h' \;\leftarrow\; \mathrm{polar}(W_h - \eta_R\,\xi_h)
  \end{equation*}
\State \textbf{return} $W_h'$
\end{algorithmic}
\end{algorithm}

\subsection{Architecture and Training}

\paragraph{Input.}
The image benchmark is CIFAR-10 \citep{krizhevsky2009}, with a patch pipeline
inspired by (not identical to) \citet{coates2011}, who used $6{\times}6$
patches at stride 1 and $1600$ K-means features.
Each $32{\times}32$ image is divided into $P{=}16$ non-overlapping $8{\times}8$
patches (raw dimension $8{\cdot}8{\cdot}3 = 192$); each patch is
contrast-normalized (mean subtracted, divided by its standard deviation) and
then whitened by PCA (principal component analysis) to $d{=}128$ dimensions,
the projection and the whitening being the same step ($V\Lambda^{-1/2}$, with
$\Lambda$ regularized by $\varepsilon = 0.1$). The PCA is fit on a
$100\mathrm{k}$-patch subsample of
the training images only.
The C2 codebook (used only in Tables~\ref{tab:main}--\ref{tab:sweep}) is
precomputed offline by Grassmann K-means whose centroid update is the chordal
(projection) mean
$\hat{C}_k = \mathrm{SVD}_{\mathrm{top-}r}\!\left(\sum_{n:\,a_n=k} A_n A_n^\top\right)$.

\paragraph{Model.}
Depth-4 isotropic transformer with pre-LayerNorm (pre-LN) and residual connections:
\begin{equation}
  X \;\leftarrow\; X + \mathrm{Attn}(\mathrm{LN}(X)), \qquad
  X \;\leftarrow\; X + \mathrm{MLP}(\mathrm{LN}(X)),
  \label{eq:block}
\end{equation}
where the multi-layer perceptron (MLP) has Gaussian Error Linear Unit (GELU)
activation (the sigmoid approximation $x\,\sigma(1.702x)$) and $2\times$
hidden expansion.
$H{=}8$ heads, $r_h{=}16$, $d{=}128$. The 16 whitened patch vectors pass
through a learned $128{\times}128$ input projection and receive a learned
positional embedding ($16{\times}128$); residual branches are scaled by
$1/\sqrt{2\cdot\mathrm{depth}}$ at initialization; there are no bias vectors
and no dropout. Readout: LayerNorm $\to$ mean-pool $\to$ linear(128$\to$64)
$\to$ ReLU $\to$ linear(64$\to$10). Total $553{,}856$ trainable parameters.
Stiefel frames are initialized by orthonormalizing a standard-normal draw via
thin QR decomposition; the A2 frames use Xavier initialization.

\paragraph{Training.}
Batch size 256; 100 epochs at $n{=}50\mathrm{k}$ and 40 epochs at
$n{=}10\mathrm{k}$; AdamW ($\eta{=}10^{-3}$, $\lambda{=}0.05$, decoupled decay
applied to rank-$\ge$2 tensors only, not to LayerNorm gains); 5-epoch linear
warmup then cosine decay to zero, applied to $\eta_R$ as well; plain
(unsmoothed) cross-entropy loss. The training set is the first $n$ images of
the CIFAR-10 training split; accuracy is measured on the full $10\mathrm{k}$
test split at the \emph{final} epoch (never best-of-run).
Hardware: single RTX 4060 (8\,GB VRAM, CUDA 12.1, PyTorch 2.5.1, \texttt{float32}).

\paragraph{Learning-rate selection.}
Rates for Tables~\ref{tab:seedstudy}--\ref{tab:ablation} were selected on a
held-out validation split (images at indices $[n_{\text{train}} :
n_{\text{train}} + n_{\text{val}}]$, never shown at training time). The AdamW
rate was re-selected from a six-point sweep ($n{=}10\mathrm{k}$, 20 epochs,
3 seeds, fixed-step Stiefel code), revising A2 from the a-priori
$\eta{=}10^{-3}$ to the validation-optimal $\eta{=}10^{-2}$. The Riemannian
rates were confirmed from a grid on seed 0 ($n{=}10\mathrm{k}$, 40 epochs):
$\eta_R{=}1.0$ for the scalar-moment and Grassmann-projector Riemannian Adam,
$0.03$ for the elementwise-moment variant, $0.02$ for the fixed-step SGD
arms, and $\lambda{=}10^{-1}$ for the soft penalty; the seed-0 run of the
selected configuration is also the seed-0 entry of the twelve-seed study.

\section{Experiments}
\label{sec:experiments}

\subsection{Main Result at $n = 50{,}000$}

\begin{table}[h]
\centering
\caption{Test accuracy and isometry error on the image patch benchmark
         ($n{=}50\mathrm{k}$ train, $10\mathrm{k}$ test).
         A2, B2-SGD, and B2 (R-Adam) are mean $\pm$ std over 3 seeds;
         C2 and C2-nb are single-seed ablations.
         B2-SGD uses the fixed-step Riemannian SGD rule ($\eta{=}0.02$,
         Grassmann projector); B2 uses the corrected Riemannian Adam.}
\label{tab:main}
\small
\begin{tabular}{lcccc}
\toprule
Variant & Test Acc & vs.\ A2 & $\isoerr{\WQ}$ & Asym \\
\midrule
A2  & $50.86 \pm 0.61\%$ & --- & $6.4\times10^{-1}$ & 0.975 \\
\textbf{B2-SGD}  & $\mathbf{56.29 \pm 0.09\%}$ & $\mathbf{+5.44\,\text{pp}}$
             & $7.2\times10^{-6}$ & 0.987 \\
C2 \small(1 seed)    & $55.97\%$ & $+5.11\,\text{pp}$ & $6.3\times10^{-6}$ & 1.003 \\
C2-nb \small(1 seed) & $56.81\%$ & $+5.95\,\text{pp}$ & $7.9\times10^{-6}$ & 0.978 \\
\midrule
\textbf{B2 (R-Adam)} & $\mathbf{61.73 \pm 0.06\%}$ & $\mathbf{+10.88\,\text{pp}}$
             & ${\leq}10^{-5}$ & --- \\
\bottomrule
\end{tabular}
\end{table}

\paragraph{Effect size against seed noise.}
Because A2 and B2-SGD share seeds, we can pair the comparison seed by seed.
The per-seed differences are $+5.71$, $+4.82$ and $+5.78\,\text{pp}$
(mean $+5.44$, standard error $0.31$, $t{=}17.6$), so every seed reproduces it.
B2-SGD's three-seed spread is smaller than the baseline's ($\pm 0.09$ against
$\pm 0.61\,\text{pp}$), but we do not build on that: in the twelve-seed study
of Section~\ref{sec:seedstudy} the corrected arm's spread ($0.49$) exceeds
A2's ($0.35$), so the constraint does not reliably reduce run-to-run variance.
As a second noise-floor estimate, C2-nb ($\beta{=}0$) is mathematically
equivalent to B2-SGD and differs from it only in the random draw, yet falls
$0.61\,\text{pp}$ from the B2-SGD result at the same seed.
Measured against either yardstick, the $-0.32\,\text{pp}$ difference between
C2 and B2-SGD is indistinguishable from noise. The Stiefel constraint alone
therefore accounts for the full gain at this scale, and the codebook prior adds
nothing detectable; we drop it from all later experiments.

\paragraph{Corrected Riemannian Adam at $n{=}50\mathrm{k}$.}
Running the corrected Adam (B2, Stiefel projector, scalar second moment,
$\eta_R{=}1.0$, same three seeds) yields $61.73 \pm 0.06\%$ against
$50.86 \pm 0.50\%$ for A2, a gap of $\mathbf{+10.88\,\text{pp}}$ (bottom row
of Table~\ref{tab:main}). The per-seed differences are $+11.28$, $+10.22$
and $+11.13\,\text{pp}$; the corrected rule almost doubles the fixed-step gain.

\paragraph{Isometry gap.}
A2 ends at $\isoerr{\WQ} = 0.64$ on average in \texttt{float32} while
B2-SGD/C2 end below $8\times10^{-6}$; over the whole run the constrained arms
never exceed $1.7\times10^{-5}$, so the separation is five orders of magnitude
at every epoch. A2's drift is not monotone: it peaks at $1.41$--$1.46$ within
the first 15 epochs and then relaxes, ending below where it peaked. It never
approaches the constrained scale at any point, however, so AdamW with
$\lambda{=}0.05$ does not enforce isometry at any stage of training---it
merely fails to enforce it by varying amounts.
We could not confirm softmax saturation as the mechanism behind the accuracy
gap, and an earlier single-layer diagnostic that suggested it (a Euclidean-SGD
arm losing accuracy as its isometry error passed $6$) ran near chance accuracy
and is not archived, so we do not rely on it. We therefore report the isometry
gap as a property of the two optimizers, and look elsewhere---to the step
rule---for the mechanism.

\subsection{Data-Size Sweep}
\label{sec:data_sweep}

We sweep $n_{\mathrm{train}} \in \{1\mathrm{k},\,2\mathrm{k},\,5\mathrm{k},\,10\mathrm{k},\,20\mathrm{k},\,50\mathrm{k}\}$ with 3 seeds each, holding the
\emph{gradient-step budget approximately fixed} at 2\,000 steps per run (exactly
2\,000 for $n \le 10\mathrm{k}$, 1\,975 at $20\mathrm{k}$ and 1\,960 at
$50\mathrm{k}$ because epochs are whole), so that the only axis is data
variety, not optimization budget; the price is that the number of passes over
each example falls from $500$ at $n{=}1\mathrm{k}$ to $10$ at
$n{=}50\mathrm{k}$, which is why the $50\mathrm{k}$ gap here ($+6.71$\,pp at
10 epochs) differs from the $+10.88$\,pp of Table~\ref{tab:main} (100 epochs).
Standard deviations in this table are population (ddof$=0$) values; those in
Tables~\ref{tab:seedstudy}--\ref{tab:ablation} are sample (ddof$=1$) values.
Results are in Table~\ref{tab:sweep} and Figure~\ref{fig:sweep}.

\begin{table}[h]
\centering
\caption{Data-size sweep, Riemannian Adam (B2) vs.\ unconstrained baseline (A2),
         ${\approx}2000$ gradient steps per run.
         Mean $\pm$ std (population) across 3 seeds.
         The gap B2$-$A2 trends upward with $n$; no crossover $N^*$ is observed.
         The earlier B2-SGD sweep (fixed-step $\eta{=}0.02$, Grassmann, +codebook C2)
         is not shown here; at $n{=}10\mathrm{k}$ it gave $+3.25$\,pp vs.\
         $+3.87$\,pp for R-Adam.}
\label{tab:sweep}
\small
\begin{tabular}{lrrr}
\toprule
$n_{\mathrm{train}}$ & A2 & B2 (R-Adam) & B2$-$A2 \\
\midrule
$1\,000$ & $23.1 \pm 0.3\%$ & $25.0 \pm 0.3\%$ & $+1.90\,\text{pp}$ \\
$2\,000$ & $26.2 \pm 1.0\%$ & $29.4 \pm 0.4\%$ & $+3.16\,\text{pp}$ \\
$5\,000$ & $30.2 \pm 0.3\%$ & $34.4 \pm 0.3\%$ & $+4.17\,\text{pp}$ \\
$10\,000$ & $33.7 \pm 0.2\%$ & $37.6 \pm 0.2\%$ & $+3.87\,\text{pp}$ \\
$20\,000$ & $38.2 \pm 0.4\%$ & $43.4 \pm 0.0\%$ & $+5.20\,\text{pp}$ \\
$50\,000$ & $46.5 \pm 0.5\%$ & $53.2 \pm 0.9\%$ & $\mathbf{+6.71\,\text{pp}}$ \\
\bottomrule
\end{tabular}
\end{table}

\begin{figure}[t]
\centering
\includegraphics[width=\textwidth]{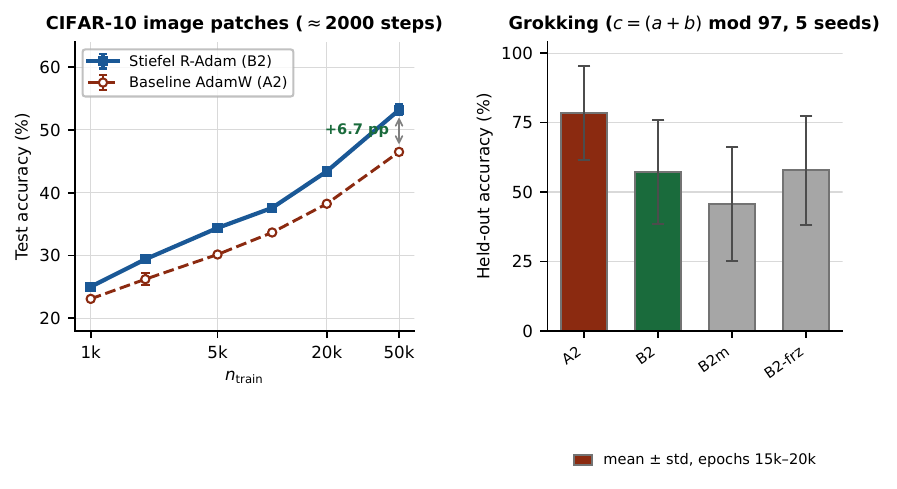}
\caption{\textbf{Left:} Stiefel Riemannian Adam (B2) vs.\ unconstrained baseline (A2)
on CIFAR-10 image patches at a fixed budget of ${\approx}2000$ steps,
3 seeds per size ($\pm$std bars shown).
Both arms improve with data, and B2 pulls progressively ahead: the gap grows
from $+1.9$\,pp at 1k to $\mathbf{+6.7}$\,pp at 50k images, with no sign of a
crossover.
\textbf{Right:} modular-arithmetic grokking ($c=(a{+}b)\bmod 97$, 20k epochs,
5 seeds, default $\beta_2{=}0.999$). Bars are the 5-seed mean validation
accuracy over the final 5\,000 epochs; error bars are $\pm 1$ standard
deviation. Training accuracy collapses repeatedly in every arm (slingshot).
A2 groks in $5/5$ seeds; B2 in $4/5$; the paired difference is
not significant ($p{=}0.18$).}
\label{fig:sweep}
\end{figure}

\subsection{Grokking: Modular Arithmetic}
\label{sec:grokking}

As tests of attention geometry, the image benchmarks have two weaknesses:
the sequence is short, so attention does little work, and the accuracy ceiling
is set by the patch representation rather than by optimization.
Modular arithmetic removes both.
A small transformer computing $c = (a+b) \bmod p$ must route information between
the operand tokens (our readout concatenates the three position states, so
attention is used but is not the only path), and the task exhibits
\emph{grokking} \citep{power2022}: the model first memorizes and only much later
generalizes. Weight decay is central to every account of that transition:
\citet{power2022} found it the most effective intervention for inducing
grokking, \citet{nanda2023} show it drives the progression from memorization
to a generalizing Fourier circuit, and \citet{liu2023omnigrok} attribute the
transition to the weight \emph{norm}.
That makes the task a sharp probe of Proposition~\ref{prop:wdecay}, which says
the Stiefel-constrained $\WQ,\WK$ are exempt from weight decay entirely---every
singular value equals $1$ throughout training, and decay cannot act on
them---while decay continues to act on the embeddings, MLP and readout, where
\citet{nanda2023} locate the generalizing circuit.

\paragraph{Setup.}
We train on $c = (a+b) \bmod 97$ with a 2-layer, 4-head, $d{=}128$,
$r{=}32$ pre-LN encoder (FFN $4\times$, one-hot token embedding
$98{\times}128$ with weight decay, no causal mask, readout
$\mathrm{concat}(3{\times}128) \to 128 \to \mathrm{ReLU} \to 97$), on a
$40\%$ training split of the $97^2 = 9409$ pairs (3\,764 training pairs, used
as a single full batch; 5\,645 held-out pairs), with AdamW
($\eta_{\mathrm{Adam}}{=}10^{-3}$, $\lambda{=}1.0$, $\beta_2{=}0.999$, no
warm-up) for 20\,000 epochs at seed 0. The Riemannian arm uses
$\eta_R{=}1.0$ (constant), $\tau{=}0.1$ and $\varepsilon{=}10^{-3}$.
\citet{power2022} used $\beta_2{=}0.98$ with warm-up and a decoder-only model;
we conducted a follow-up run at that setting ($\beta_2{=}0.98$, 10-step warmup,
5 seeds per arm; see Limitations) to test whether those choices eliminate the instabilities.
``Memorized'' means training accuracy $\ge 0.99$; ``grokked'' means held-out
accuracy $\ge 0.95$, evaluated every 100 epochs.
Five arms differ only in how $\WQ,\WK$ are treated:
\textbf{A2}~(unconstrained AdamW),
\textbf{B2}~(Riemannian Adam + polar retraction),
\textbf{B2m}~(Stiefel tangent projection, then the \emph{elementwise} Adam
step at $\eta_{\mathrm{Adam}}$ with re-projection, moment transport and polar
retraction, but no scalar moment and no step cap; note that B2m does not use
$\eta_R$),
\textbf{B2m-can}~(as B2m but with the canonical-metric gradient
$\nabla^{\mathrm{can}}$ \eqref{eq:riemannian_grad_can}), and
\textbf{B2-frz}~($\WQ,\WK$ frozen at their orthonormal initialization).
All other parameters (embeddings, FFN, readout) are trained identically
across arms. Every arm was run once at this configuration; A2 and B2-frz were
additionally rerun once at the identical configuration and seed. The multi-seed
study ($\ge5$ seeds per arm) described in the Limitations uses these four arms
with \textbf{B2m-can} omitted.

\paragraph{Results.}
Table~\ref{tab:grokking} collects the results.
In the first run, A2 memorizes at epoch~100 and groks at epoch~3\,700, then
passes through repeated collapse-and-recovery cycles and happens to end at
$61.1\%$ held-out accuracy; B2 memorizes at epoch~100, does not grok until
epoch~18\,700, and ends at $97.0\%$; the controls end at $62.3\%$ (B2-frz),
$60.7\%$ (B2m) and $71.1\%$ (B2m-can). Read on its own, that run suggests the
constrained arm groks and the others do not. The reruns say otherwise. A2
rerun at the identical configuration groks at epoch~4\,400 and ends at
$97.3\%$; B2-frz rerun groks at epoch~12\,600 and ends at $96.8\%$. Over the
final 5\,000 epochs the runs average $0.55$ and $0.83$ (A2, two runs), $0.53$
and $0.56$ (B2-frz, two runs), $0.63$ (B2m) and $0.72$ (B2), so the
constrained arm's tail mean lies inside the spread of the baseline's two runs.
The endpoint at epoch 20\,000 is a sample from an oscillation, not a
converged value, and with only one run per arm, the arms cannot be ranked.
A five-seed follow-up at the same configuration confirms this: A2 groks in
$5/5$ seeds (median epoch $4\,985$, tail mean $0.78$) while B2 groks in
$4/5$ (median $8\,785$, tail mean $0.57$); the paired difference is not
significant ($t=-1.6$, $p=0.18$, see Limitations).
With the \citet{power2022} stabilization ($\beta_2{=}0.98$, 10-step warmup),
the slingshot persists in all arms; A2 groks in $5/5$ seeds (median epoch
$2\,230$, tail mean $0.95$) and B2 in only $2/5$ (median $8\,840$, tail mean
$0.82$); the paired difference is now significant ($t=-3.8$, $p=0.019$) but
in A2's favor, not B2's.

Two further observations bear on how these trajectories should be read.
First, in every arm, including B2, \emph{training} accuracy repeatedly falls
back to chance after memorization (minimum $\approx 1\%$ in all four
archived trajectories).
This is the signature of the optimizer instability that \citet{thilak2022}
call the slingshot mechanism, not the memorize-then-generalize transition of
\citet{power2022}; B2's held-out accuracy similarly falls to $24\%$ after its
peak.
Second, the trajectories are sensitive to $\eta_R$ in a
way we did not tune per arm: at $\eta_R{=}0.1$ two archived B2 runs end at
$73.1\%$ and $56.8\%$, and at $\eta_R{=}0.3$ B2 ends at $46.6\%$. (An earlier
version of this paper compared B2m across $\eta_R$ values; B2m does not use
$\eta_R$, and the $60.7\%$ versus $75.3\%$ difference between its two archived
runs is run-to-run variation at an identical configuration.)

\begin{table}[h]
\centering
\caption{Grokking on $c{=}(a{+}b)\bmod 97$, seed 0, 20\,000 epochs.
         ``Grokked'' is the first evaluated epoch where held-out accuracy
         reaches $95\%$ (evaluation every 100 epochs); ``end'' is held-out
         accuracy at epoch 20\,000; ``tail'' is its mean over epochs
         15\,000--20\,000. Where two runs exist at the identical
         configuration both are shown (run 1 / run 2); B2m-can is archived as
         a summary only, so its tail is unavailable.
         Isometry error $\isoerr{\WQ}$ (max-norm; Stiefel arms stay
         $\leq 2.1\times10^{-5}$ at every evaluated epoch and end
         $\leq 10^{-5}$; A2 is unconstrained).}
\label{tab:grokking}
\small
\begin{tabular}{lcccc}
\toprule
Arm & Memorized & Grokked (run 1 / 2) & End (run 1 / 2) & Tail (run 1 / 2) \\
\midrule
A2 (unconstrained)                    & 100 & 3\,700 / 4\,400   & $61.1$ / $97.3\%$ & $0.55$ / $0.83$ \\
B2m (Euclidean grad, elem.\ mom.)     & 100 & --- & $60.7\%$ & $0.63$ \\
B2m-can (canonical grad, elem.\ mom.) & 100 & --- & $71.1\%$ & --- \\
B2-frz (frozen orthonormal)           & 100 & --- / 12\,600 & $62.3$ / $96.8\%$ & $0.53$ / $0.56$ \\
B2 (scalar mom.\ + step cap)          & 100 & 18\,700 & $97.0\%$ & $0.72$ \\
\bottomrule
\end{tabular}
\end{table}

\paragraph{What Proposition~\ref{prop:wdecay} does and does not say here.}
In B2, $\WQ$ and $\WK$ are pinned to the Stiefel manifold, and
Proposition~\ref{prop:wdecay} shows that the Riemannian gradient of
$\tfrac{\lambda}{2}\|\WQ\|_F^2$ vanishes identically there: the normal space
$\mathcal{N}_W\St = \{WS : S \in \mathrm{Sym}(r)\}$ already contains the decay
gradient $\lambda\WQ = \WQ \cdot \lambda I_r$ (take $S = \lambda I_r$), so the
tangent projection \eqref{eq:riemannian_grad} annihilates it before it can
reach the update. (In our implementation the decay term is not even formed
for the constrained frames, which is equivalent.) That is an exact statement
about the update, and it is the reason the constrained arm is a genuine probe
of the weight-decay account of grokking: decay still acts on the embeddings,
MLP and readout, where \citet{nanda2023} locate the generalizing circuit, but
not on $\WQ,\WK$. Consistent with that account, B2 groks \emph{later} than A2
in the one run we have ($18\,700$ versus $3\,700$--$4\,400$ epochs), as
removing decay from part of the network would predict
\citep{power2022,nanda2023}; Proposition~\ref{prop:wdecay} is therefore
complementary to those accounts, not in tension with them.

An earlier version of this paper went further and attributed the
collapse-and-recovery cycles themselves to weight decay, reading B2's
$97.0\%$ endpoint as the attention geometry surviving cycles that decay
drives through the rest of the model. Our own archived weight-decay sweep
does not support that reading: A2 at $\lambda{=}0.3$ never groks and ends at
$1.0\%$ held-out accuracy, and at $\lambda{=}0.1$ it ends at $42.1\%$, so
\emph{lowering} decay makes the baseline worse, not better, and the cycles
appear in every arm at every $\lambda$ we ran, with training accuracy falling
to chance inside them. We therefore withdraw the mechanistic claim. What
remains is the exact exemption of Proposition~\ref{prop:wdecay}, whose
empirical consequences for grokking are now settled by that instrument: the
five-seed stabilized study (see Limitations) finds that the slingshot persists
and A2 groks more reliably than B2 ($p{=}0.019$, A2's favor), so the
exemption does not give B2 a grokking advantage.

These results jointly provide a geometric localization of the grokking
mechanism.  Under the Omnigrok account, grokking is driven by norm compression:
weight decay gradually reduces the weight norms until the generalizing solution
becomes lower loss than the memorizing one.  If the attention frame norms
$\|\WQ\|_F$ and $\|\WK\|_F$ were the locus of that compression, then fixing
them exactly at $\sqrt{r}$ throughout training (the Stiefel constraint) would
either accelerate grokking---by placing them at the ``right'' scale from the
outset---or eliminate a necessary degree of freedom and block it.  Neither
happens: the constraint is, at worst, mildly harmful under the Power et al.\
$(\beta_2{=}0.98)$ stabilization, and neutral otherwise.  The conclusion is
direct: the norm degree of freedom that is causally responsible for grokking
lies in the embeddings, FFN and readout, not in the attention frame
orientations.  The Stiefel constraint acts on a geometrically orthogonal
subspace relative to where the compression must occur, so it leaves the
grokking timescale unchanged---exactly as one would predict from the finding
of \citet{nanda2023} that the generalizing Fourier circuit is encoded in the
embedding and MLP weights, not in the query and key matrices.

The three controls were designed to isolate ingredients by elimination:
B2-frz keeps the constraint but forgoes any update, B2m keeps the constraint
and the tangent projection but uses an elementwise moment and no step cap,
and B2m-can additionally swaps the Euclidean gradient for the canonical one.
Because B2-frz groks in one of its two runs and A2 in both of its runs, while
each Riemannian arm has a single run, the elimination cannot be carried out
on these data, and we do not claim that any ingredient has been isolated.

\subsection{Harnessing the Grokking Transition}
\label{sec:harnessing}

The localization result converts a null finding into a design question.
Proposition~\ref{prop:wdecay} and the five-seed evidence jointly establish
that the norm degree of freedom responsible for grokking lives in the circuit
stratum (embeddings, FFN, readout), and that the routing frames~$\WQ,\WK$ play
no causal role.
The natural corollary is that interventions targeting the circuit
stratum should accelerate the transition, while interventions on
the routing frames should be neutral.
We test three handles, each grounded in the localization:
(H1)~remove weight decay from $\WQ,\WK$, since Proposition~\ref{prop:wdecay}
says it cannot act there and its presence may interact with the step-norm
dynamics on the frame;
(H2)~increase weight decay on the circuit parameters, directly amplifying
the norm-compression event that \citet{nanda2023} identify as the cause;
(H3)~initialize the circuit parameters at small norm, following
\citet{liu2023omnigrok}, who show that placing the network closer to the
generalizing basin at initialization shortens the compression path.

\paragraph{Setup.}
Four arms run at the same configuration as Section~\ref{sec:grokking}
(seed~0, 20\,000 epochs, $\beta_2{=}0.999$, no warmup).
\textbf{A2-nwf}: A2 with $\lambda{=}0$ on $\WQ,\WK$ (no decay on routing
frames, no Stiefel constraint; circuit $\lambda{=}1.0$).
\textbf{B2-hi}: Stiefel $\WQ,\WK$ with the Riemannian Adam of
Section~\ref{sec:method}, plus circuit $\lambda{=}2.0$ (H1~+~H2).
\textbf{B2-sn}: Stiefel $\WQ,\WK$ plus all circuit parameters initialized at
$0.1\times$ their default scale (H1~+~H3).
\textbf{B2-comb}: Stiefel $\WQ,\WK$ with both H2 and H3 applied (H1+H2+H3).
A2 and B2 baselines come from the five-seed study
(\texttt{experiments/pa18\_grok\_5seed.json}).
Grokking is assessed by the \emph{sustained} metric: the first epoch at
which held-out accuracy $\ge 0.95$ holds for at least $k{=}3$ consecutive
evaluation windows (spacing 100 epochs).
The single-crossing metric used in earlier sections is also reported as
``first'' for comparison; it captured a transient spike in B2-hi that the
sustained metric correctly de-risks.

\paragraph{Results (seed~0 pilot).}

\noindent\textbf{A2-nwf vs.\ B2-hi.}\quad
The dominant contrast is in the isometry error~$\isoerr{\WQ}$.
A2-nwf allows the routing frame to drift freely: by epoch~12\,000 its
isometry error reaches~$18$, six orders of magnitude above the
${\approx}10^{-5}$ that B2-hi maintains throughout all 20\,000 epochs.
Critically, B2-hi undergoes a slingshot collapse at epoch~9\,000 (training
accuracy falls to $32\%$) yet its isometry error stays at
$8.6\times10^{-6}$---unchanged to within measurement noise from its
value at epoch~1\,000.
The Stiefel constraint decouples the routing-frame geometry from the
optimizer turbulence that sweeps through the circuit.

A2-nwf crosses the $0.95$ held-out threshold at epoch~2\,500 but does not
sustain it: training collapses, never recovers stably, and the trajectory
ends at $0.9\%$ held-out accuracy.
B2-hi memorizes at epoch~800, collapses once at epoch~9\,000, and recovers
within a single evaluation window; by the sustained metric it first holds
$\ge 0.95$ at epoch~2\,100 and does so for every remaining evaluation
window through epoch~20\,000, ending at $100\%$.
Table~\ref{tab:harnessing} summarizes these results alongside the
five-seed baselines.

\begin{table}[h]
\centering
\caption{PA19 harnessing (seed~0 pilot, 20\,000 epochs).
         ``Mem.'' is the first epoch with training accuracy $\ge 0.99$.
         ``First'' is the single-crossing epoch for held-out accuracy $\ge 0.95$.
         ``Sustained'' is the $k{=}3$ sustained metric; \emph{none}
         means the threshold was never held for three consecutive windows.
         Isometry error $\isoerr{\WQ}$ at the final epoch.
         A2/B2 five-seed baselines use single-crossing (sustained metric
         not yet computed for those runs).}
\label{tab:harnessing}
\small
\begin{tabular}{lcccc}
\toprule
Arm & Mem.\ epoch & First gen.\ epoch & Sust.\ gen.\ epoch & Final iso \\
\midrule
A2 (5-seed med.)          & 200    & 4\,985         & ---             & unconstrained \\
B2 (5-seed med.)          & 200    & 8\,785         & ---             & ${<}2{\times}10^{-5}$ \\
\midrule
A2-nwf (seed 0)           & 100    & 2\,500         & \emph{none}     & $18$ \\
B2-hi  (seed 0)           & 800    & 800 (transient)& 2\,100          & $1.0{\times}10^{-5}$ \\
B2-sn  (seed 0)           & 100    & 500            & 500 (fragile)   & $7.6{\times}10^{-6}$ \\
B2-comb (seed 0)          & 1\,100 & 1\,100         & 2\,200          & $9.8{\times}10^{-6}$ \\
\bottomrule
\end{tabular}
\end{table}

\noindent\textbf{Interpretation.}\quad
The $1.6\times10^6$-factor iso contrast at epoch~12\,000 (A2-nwf: $18$;
B2-hi: $1.1\times10^{-5}$; $\approx 130\,\mathrm{dB}$) is the sharpest
geometric diagnostic in this paper.
It shows that removing weight decay from the routing frames alone (H1
without the Stiefel constraint) is destabilizing: the frames drift
catastrophically even when circuit regularization is intact.
The Stiefel constraint is not a passive bookkeeping device; it is the
mechanism that keeps the routing frame anchored while the optimizer
compresses the circuit toward the generalizing solution.
B2-sn and B2-comb complete the iso picture.
B2-sn's iso stays locked (${\approx}8{\times}10^{-6}$) through all 20\,000 epochs,
confirming that iso stability is a property of the Stiefel constraint alone,
independent of the regularization schedule.
Its val trajectory is highly unstable: the model meets the sustained threshold
at epoch~500 (three consecutive windows $\ge 0.95$) but collapses repeatedly
thereafter, ending at $97.2\%$.
B2-comb (H1+H2+H3) also maintains iso (${\approx}10^{-5}$), reaches the
sustained threshold at epoch~2\,200, then collapses at epoch~12\,500,
recovers partially, and ends at $33.3\%$ at epoch~20\,000.
The iso column of Table~\ref{tab:harnessing} tells a consistent story across
all four arms: the Stiefel constraint anchors the routing frame regardless
of which other regularizers are active.
Whether those regularizers stabilize the val trajectory is a question
single-seed evidence cannot answer.

\subsection{Does the Optimization Matter, or Only the Constraint?}
\label{sec:seedstudy}

A single seed compares two optimization trajectories, not two methods, since the
seed fixes the initial frames and the batch order together. We therefore run
four arms from $12$ independent starts and compare them \emph{paired}, seed by
seed. Among them is the control the original study lacked: $\WQ,\WK$ held at
their orthonormal initialization and never updated ($\eta = 0$). The logic of
that arm is simple---if the constraint alone explains the gain, then freezing
the frames should cost nothing, and the frozen arm should match the trained one.

\begin{table}[h]
\centering
\caption{Twelve independent starts, image patch benchmark $n{=}10\mathrm{k}$, 40 epochs.
         Differences are paired within seed; ``wins'' counts starts where the
         arm beats A2. Bootstrap $95\%$ confidence intervals (CIs) over $20{,}000$ resamples.
         ``Frozen'' is the Riemannian arm with $\eta_R{=}0$ (it still applies
         the polar retraction to the unchanged frame each step, which is why
         it costs more per epoch than A2 in Table~\ref{tab:ablation});
         ``Riemannian SGD (PA16)'' is the B2-SGD rule of Table~\ref{tab:main}
         ($\eta{=}0.02$, Grassmann projector); ``Riemannian Adam'' is B2 with
         $\eta_R{=}1.0$, $\tau{=}0.1$, $\varepsilon{=}10^{-8}$. Only the
         non-$\WQ,\WK$ parameters share an initialization across arms; the
         frames are drawn per arm from the same seed.}
\label{tab:seedstudy}
\small
\begin{tabular}{lccccc}
\toprule
Arm & Test acc & $\Delta$ vs A2 & $t$ & wins & $95\%$ CI \\
\midrule
A2 (unconstrained AdamW)        & $36.11 \pm 0.49$ & ---   & ---   & ---     & --- \\
Frozen orthonormal ($\eta{=}0$) & $37.81 \pm 0.41$ & $+1.69$ & $9.74$ & $12/12$ & $[+1.37, +2.02]$ \\
Riemannian SGD (PA16)           & $38.09 \pm 0.34$ & $+1.97$ & $12.57$ & $12/12$ & $[+1.68, +2.27]$ \\
\textbf{Riemannian Adam (ours)} & $\mathbf{42.90 \pm 0.49}$ & $\mathbf{+6.79}$ & $\mathbf{38.33}$ & $\mathbf{12/12}$ & $[+6.44, +7.09]$ \\
\bottomrule
\end{tabular}
\end{table}

\begin{figure}[h]
\centering
\includegraphics[width=\textwidth]{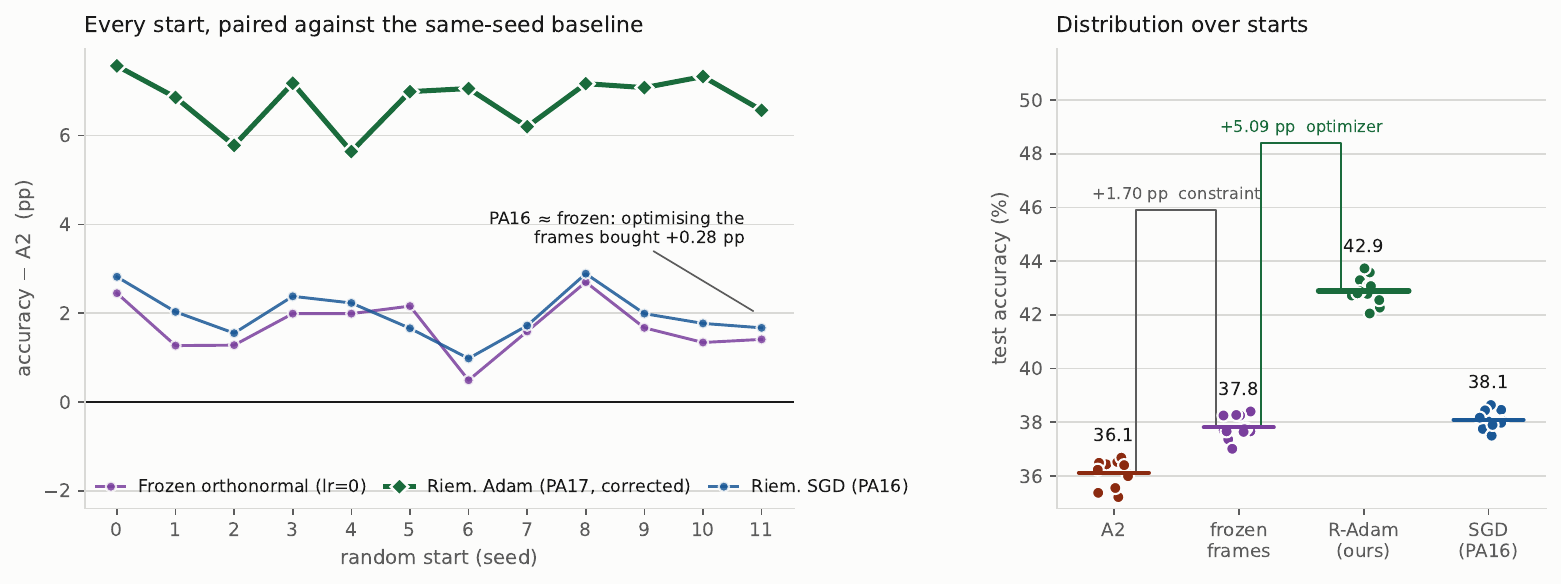}
\caption{Left: every start, paired against the same-seed baseline; all three
arms beat A2 on all twelve starts, but PA16 and the frozen control lie on top of
one another. Right: the distribution over starts.}
\label{fig:seedstudy}
\end{figure}

Two contrasts carry the message, both paired over the same twelve starts
(Table~\ref{tab:seedstudy} and Figure~\ref{fig:seedstudy}):
\begin{align*}
  \text{Riemannian Adam} - \text{frozen} &= +5.09\,\text{pp}
    \quad (t = 23.0,\ 12/12), \\
  \text{PA16 SGD} - \text{frozen} &= +0.28\,\text{pp}
    \quad (t = 3.27,\ 11/12).
\end{align*}
Of PA16's $+1.97\,\text{pp}$ over the baseline, the orthonormal frame
alone delivers $+1.69$ and optimizing that frame delivers only $+0.28$---a
detectable contribution, but one $18\times$ smaller than what the corrected
update extracts from the very same constraint.
To a first approximation, the original result was a statement about
initialization rather than about Riemannian optimization.
It is also the empirical counterpart of Remark~\ref{rem:freeze}: a rule whose
step is degree one in the gradient barely moves the frames, so the frames it
ends with are very nearly the frames it began with.

\paragraph{Why twelve starts suffice.}
Twelve is enough because the effect is enormous relative to the per-seed variance.
Back-computing from $t = 38.33$, the per-seed standard deviation of the paired
B2$-$A2 differences is
$\hat\sigma_d = 6.79 / 38.33 \times \sqrt{12} = 0.61\,\text{pp}$.
For a two-sided paired $t$-test at $\alpha = 0.05$ with $N = 12$ starts
($\mathrm{df} = 11$), the minimum detectable effect at $80\%$ power is
\begin{equation}
  \mathrm{MDE}_{12}
  = \bigl(t_{0.025,\,11} + t_{0.20,\,11}\bigr)
    \frac{\hat\sigma_d}{\sqrt{12}}
  = (2.201 + 0.876)\times 0.177
  \approx 0.55\,\text{pp.}
\end{equation}
Our primary contrast ($+6.79\,\text{pp}$, Cohen's $d = 10.9$) is
$12.3\times$ the MDE, so $N = 12$ provides essentially unit power
for the comparison of interest
($1 - \Pr[\text{type-II}] \approx 1 - 2\times 10^{-15}$).
The smallest contrast we report---PA16 versus frozen frames,
$+0.28\,\text{pp}$, $t = 3.27$---is statistically significant at $\alpha = 0.05$
(above the critical value $t_{0.025,\,11} = 2.20$). Its prospective power
depends on which $\hat\sigma_d$ one assumes: reusing the B2$-$A2 value
$0.61$\,pp (a conservative choice, since that contrast is between different
optimizers) gives $30\%$ by the exact non-central $t$ distribution, and $N
\ge 47$ would be needed to reach $80\%$; the contrast's own paired standard
deviation, $0.30$\,pp, gives $85\%$ at $N = 12$.
We therefore treat this single contrast as an exploratory finding and do not
build any claim on it beyond confirming that the PA16 result is largely
attributable to initialization.

\subsection{Which Change Earns the Gain?}
\label{sec:ablation}

The corrected update differs from PA16 in three ways at once---the projector,
the step rule, and the second moment---so a single ``corrected versus original''
number could not say which of them is responsible. Each arm below therefore
switches on a defined subset of the three, each at the learning rate selected
in Section~\ref{sec:method} (SGD arms $0.02$; scalar-moment and
Grassmann-projector Riemannian Adam $1.0$; elementwise-moment arm $0.03$;
$\tau{=}0.1$ for all Riemannian Adam arms).

\begin{table}[h]
\centering
\caption{Ablation, image patch benchmark $n{=}10\mathrm{k}$, 40 epochs,
         seeds 0--11 for every row (all arms now twelve seeds, matching
         Section~\ref{sec:seedstudy}). The soft penalty is a single seed at
         its best of three $\lambda$ values. Contrasts are paired within seed.
         Column (a) is the Stiefel tangent projector (vs.\ the Grassmann
         horizontal projector); (b,c) is the adaptive, step-capped rule
         (vs.\ fixed step); the elementwise-$v$ row differs from the
         scalar-$v$ row only in the second moment.
         Wall-clock is per epoch on one RTX~4060.}
\label{tab:ablation}
\small
\begin{tabular}{lcccc}
\toprule
Arm & (a) proj & (b,c) step & Test acc & s/epoch \\
\midrule
A2 (unconstrained)                 & --- & --- & $36.11 \pm 0.49$ & $0.73$ \\
Frozen orthonormal                 & --- & --- & $37.81 \pm 0.41$ & $2.01$ \\
Soft penalty $\lambda{=}10^{-1}$   & --- & --- & $37.92$          & $1.3$  \\
PA16: SGD, Grassmann               & \ding{55} & \ding{55} & $38.09 \pm 0.34$ & $2.44$ \\
\quad + Stiefel projector          & \checkmark & \ding{55} & $37.51 \pm 0.83$ & $2.46$ \\
R-Adam, Grassmann projector        & \ding{55} & \checkmark & $42.72 \pm 0.66$ & $2.88$ \\
R-Adam, elementwise $v$            & \checkmark & \checkmark & $43.09 \pm 0.94$ & $2.88$ \\
\textbf{R-Adam, scalar $v$}        & \checkmark & \checkmark & $\mathbf{42.90 \pm 0.49}$ & $2.87$ \\
\bottomrule
\end{tabular}
\end{table}

The paired contrasts, each between two rows differing in one ingredient, are:
\begin{align*}
  \text{(b,c) step rule, at the Grassmann projector: R-Adam} - \text{PA16}
     &= +4.63\,\text{pp} \quad (12/12 \text{ seeds},\ t = 25.1), \\
  \text{(a) projector, at the fixed step: Stiefel} - \text{Grassmann}
     &= -0.58\,\text{pp} \quad (3/12,\ t = -2.1), \\
  \text{(a) projector, at the adaptive step: elementwise-}v - \text{Grassmann}
     &= +0.37\,\text{pp} \quad (6/12,\ t = 0.9), \\
  \text{(c) second moment: scalar} - \text{elementwise}
     &= -0.19\,\text{pp} \quad (5/12,\ t = -0.6).
\end{align*}
These are not an additive decomposition of the total: the chain
PA16 $\to$ R-Adam/Grassmann $\to$ R-Adam/elementwise $\to$ R-Adam/scalar sums
to $+4.63 + 0.37 - 0.19 = +4.81$\,pp, which is the total B2$-$PA16 difference
on twelve seeds ($42.90 - 38.09$).

The entire gain belongs to the step rule of Proposition~\ref{prop:homog}, which
makes the displacement independent of the gradient magnitude and bounds it by
the step cap of Proposition~\ref{prop:trust}. The theory identified this
before we ran the ablation, and the ablation confirms it.
The other two changes contribute nothing measurable, and we state that plainly.
At the adaptive step the projector difference is $+0.37$\,pp ($6/12$,
$t{=}0.9$, NS) even though it recovers the ${\approx}24\%$ of tangent gradient
reported in Table~\ref{tab:gauge}---and this is what
Proposition~\ref{prop:steepest} predicts, since that component is worth only
${\approx}3\%$ of the per-step decrease. (At the fixed step the Stiefel
projector costs $-0.58$\,pp ($3/12$, $t{=}{-}2.1$, not significant (NS) at $\alpha{=}0.05$), a
small effect we do not interpret.)
Equivariance likewise buys no accuracy. The case for the scalar second moment is
that it makes the method coordinate free (Proposition~\ref{prop:equiv}) at no
cost, not that it moves the benchmark.

\paragraph{Power of the twelve-seed ablation.}
Reusing $\hat\sigma_d = 0.61\,\text{pp}$ from the B2$-$A2 contrast in
Table~\ref{tab:seedstudy}, the minimum detectable effect at $N = 12$
($\mathrm{df} = 11$, $t_{0.025,\,11} = 2.201$, $t_{0.20,\,11} = 0.876$) is
\begin{equation}
  \mathrm{MDE}_{12}
  = (2.201 + 0.876)\times\frac{0.61}{\sqrt{12}}
  \approx 0.54\,\text{pp.}
\end{equation}
The step-rule contrast ($+4.63\,\text{pp}$) is $8.6\times\mathrm{MDE}_{12}$
and is detected on $12/12$ seeds ($t = 25.1$).
The three null results---projector at SGD ($-0.58\,\text{pp}$,
$1.1\times\mathrm{MDE}_{12}$, $|t|{=}2.1 < t_{0.025,\,11}$),
projector at Adam ($+0.37\,\text{pp}$, $0.69\times\mathrm{MDE}_{12}$),
and equivariance ($-0.19\,\text{pp}$, $0.35\times\mathrm{MDE}_{12}$)---are
all non-significant; they are consistent with true effects of zero, which is
exactly what Propositions~\ref{prop:steepest} and~\ref{prop:equiv} predict.
The ablation is therefore powered for the one effect it needs to confirm, and
the null results are theoretically expected rather than incidental.

\paragraph{The soft-constraint baseline.}
At its best of three $\lambda$ values ($10^{-1}$; single seed), the soft
penalty $\lambda\|W^\top W - I\|_F^2$ \citep{bansal2018} reaches $37.9\%$,
matching the frozen frame and PA16 but falling $4.9$\,pp short of the
corrected method. It also saturates at an isometry error of
$2\times10^{-3}$, three orders above the hard constraint, because it trades
orthogonality against the task loss.
The reading is therefore clean: orthogonality \emph{as regularization} is
cheaply obtained and worth about $+4$\,pp, whereas the manifold yields the
remaining $+5$\,pp only to an optimizer that works on it correctly.

\subsection{Causal Test: $\varepsilon$ Probe of the Scale-Free Regime}
\label{sec:eps_probe}

The ablation confirms the correlation---the step rule drives the gain---but
does not establish causation: one might hypothesize that some other property
of Riemannian Adam (e.g.\ the trust cap, or the manifold retraction)
is the real driver, and the ablation identifies only the composite change
``fixed step $\to$ adaptive step''.
Proposition~\ref{prop:homog} and Remark~\ref{rem:settling} give a sharper
handle: the scale-free property is \emph{exactly} a function of $\varepsilon$.
When $\varepsilon \gg \|\xi_t\|_F$ the step degrades smoothly from scale-free
to gradient-proportional; this is the only change. If scale-freeness is the
mechanism, accuracy should be flat for small $\varepsilon$ and drop when
$\varepsilon$ crosses the late-training tangent gradient norm.

\paragraph{Protocol.}
We run B2 (otherwise unchanged) at five $\varepsilon$ values---$\{10^{-8},\,
10^{-6},\,10^{-4},\,10^{-2},\,10^{-1}\}$---with 12 independent seeds each
($n{=}10\mathrm{k}$, 40 epochs, all other hyperparameters fixed).

\paragraph{Result.}
Accuracy is flat across the four smallest values:
$42.74\%$, $42.80\%$, $42.90\%$, $42.83\%$ (all pairwise differences
$|\Delta|{<}0.2\,\text{pp}$, $p{>}0.5$ vs.\ $\varepsilon{=}10^{-8}$).
At $\varepsilon{=}0.1$ accuracy falls to $40.14\%$, a drop of
$\mathbf{-2.60\,\text{pp}}$ ($t{=}{-}8.52$, $p{<}0.001$, $12/12$ seeds lose
accuracy). The threshold aligns with the late-training tangent gradient norm
measured by trajectory logging (below), which is ${\approx}0.03$--$0.10$:
at $\varepsilon{=}10^{-2}$ the step is still in the scale-free regime;
at $\varepsilon{=}10^{-1}$ it is not.

This is a direct causal test of Proposition~\ref{prop:homog}: the only
change at each $\varepsilon$ is the degree of scale-freeness, and the
accuracy follows the predicted step-function profile.

\paragraph{Corroborating trajectory evidence.}
To verify the mechanism in the trained model, we log the per-epoch tangent
gradient norm $\|\xi_t\|_F$ and step norm $\|\Delta W\|_F$ for the Stiefel
parameters throughout training. In late training (final 25\% of epochs), B2's
effective step-per-gradient ratio is $0.52$, against $0.006$ for Riemannian
SGD (PA16)---an $86\times$ amplification. For PA16, the step shrinks
proportionally to the gradient (both ratios ${\approx}0.02$ throughout),
consistent with the degree-one characterization of Proposition~\ref{prop:homog}(i).
For B2 the ratio stays large because $\sqrt{\hat v_t}$ accumulates over the
full history ($\beta_2{=}0.999$) and remains well above the current gradient
norm; the step is not clipped to zero as the gradient shrinks. Both results
use 3 seeds; the $\varepsilon$ sweep above is the cleaner causal test.

\section{Geometric Analysis}
\label{sec:analysis}

\subsection{There Is No Gradient to Reclaim from a Symmetry}
\label{sec:nogauge}

It is tempting to argue that a Euclidean optimizer wastes part of every step on
the gauge fiber---directions along which the loss is constant---and that a
Riemannian method is better because it projects that waste away. The argument
is wrong in general, for a reason that has nothing to do with attention.

\begin{proposition}[Symmetry directions carry no gradient]
\label{prop:nogauge}
Let a Lie group $G$ act smoothly on the parameter space with
$L(g \cdot \theta) = L(\theta)$ for all $g \in G$.
Then $\nabla L(\theta)$ is orthogonal to the tangent space of the orbit
$G\cdot\theta$ at $\theta$. In particular the gradient has \emph{zero}
component along every gauge direction, and projecting that component out is the
identity map.
\end{proposition}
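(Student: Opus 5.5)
The plan is to differentiate the invariance identity along one-parameter subgroups. Fix $\theta$ and an element $X$ of the Lie algebra $\mathfrak g$ of $G$. Consider the curve $\gamma_X(t) = \exp(tX)\cdot\theta$. Because the action is smooth, $\gamma_X$ is a smooth curve through $\theta = \gamma_X(0)$. Its velocity $\gamma_X'(0)$ is the fundamental (infinitesimal generator) vector field of $X$ evaluated at $\theta$. The tangent space of the orbit at $\theta$, $T_\theta(G\cdot\theta)$, is by definition the image of the linear map $X \mapsto \gamma_X'(0)$. Every gauge direction is therefore of this form, and it suffices to show $\langle \nabla L(\theta), \gamma_X'(0)\rangle = 0$ for each $X$.

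The key step uses invariance directly. $L(\gamma_X(t)) = L(\theta)$ for all $t$, so $\tfrac{d}{dt}\big|_{t=0} L(\gamma_X(t)) = 0$. By the chain rule this derivative equals $dL_\theta(\gamma_X'(0)) = \langle \nabla L(\theta), \gamma_X'(0)\rangle$. Here the gradient is defined with respect to whatever inner product is in use: the Frobenius one on the ambient space, or the embedded metric on $\St(d,r)$ when the parameter space is a manifold and $L$ is its restriction. Since the two sides agree, $\nabla L(\theta)$ annihilates every vector of the form $\gamma_X'(0)$. By linearity it is orthogonal to their span, which is the whole orbit tangent space.

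The ``in particular'' clause then follows at once. The orthogonal projector $P$ onto $T_\theta(G\cdot\theta)^\perp$ satisfies $Pv = v$ for any $v$ already in that complement. Removing the gauge component of $\nabla L(\theta)$, that is applying $P$, therefore returns $\nabla L(\theta)$ unchanged.

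I would also remark on the attention case, using the paper's setup. The joint action $(\WQ,\WK)\mapsto(\WQ O,\WK O)$ has orbit tangent $\{(\WQ\Omega,\WK\Omega):\Omega^\top=-\Omega\}$, so the gradient is orthogonal to this joint direction. It is \emph{not} orthogonal to the per-frame directions $(\WQ\Omega,0)$, because those directions are not symmetries. This explains why per-frame removal deletes genuine gradient.

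The only real obstacle is being precise about ``orbit tangent space''. The orbit need not be an embedded submanifold; for example it may be immersed, or the orbit type may change from point to point. I will sidestep this by \emph{defining} the gauge directions at $\theta$ as the image of the infinitesimal action $\mathfrak g \to T_\theta\Theta$. This is a linear subspace in all cases, and it coincides with $T_\theta(G\cdot\theta)$ whenever the orbit is a manifold. A secondary point to state explicitly is that the metric need not be $G$-invariant: the argument uses only the first-order identity $dL_\theta(v)=0$ and the definition of the gradient. The whole argument therefore goes through for any Riemannian metric, including both the embedded and canonical metrics discussed earlier.
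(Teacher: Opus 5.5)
Your proof is correct and is essentially the paper's argument: differentiate the constant function $t\mapsto L(\exp(t\xi)\cdot\theta)$ at $t=0$ and note that these velocities span $T_\theta(G\cdot\theta)$; your remarks on the choice of metric and on the projector are accurate refinements. One small caution on your attention aside: invariance merely fails to \emph{force} orthogonality to the per-frame directions $(\WQ\Omega,0)$, so say the gradient \emph{need not} be orthogonal to them rather than that it is not.
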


\begin{proof}
Let $\xi$ be in the Lie algebra of $G$. Invariance makes
$t \mapsto L(\exp(t\xi)\cdot\theta)$ constant, so differentiating at $t = 0$
gives $\langle \nabla L(\theta),\, \tfrac{d}{dt}\big|_{0}\exp(t\xi)\cdot\theta
\rangle = 0$. These derivatives span $T_\theta(G\cdot\theta)$.
\end{proof}

\paragraph{Which group actually acts here.}
The attention score $x_i^\top \WQ \WK^\top x_j$ is invariant under the
\emph{joint} action
\begin{equation}
  (\WQ, \WK) \;\longmapsto\; (\WQ O,\; \WK O), \qquad O \in \mathrm{O}(r),
  \label{eq:jointgauge}
\end{equation}
but not under rotating either frame alone. The gauge is therefore the single
shared vertical family
$\{(\WQ\Omega,\WK\Omega) : \Omega^\top = -\Omega\}$, and the parameter space is
$\bigl(\St(d,r)\times\St(d,r)\bigr)/\mathrm{O}(r)$ rather than the Grassmann
product $\Gr(d,r)\times\Gr(d,r)$ (where $\Gr(d,r)$ denotes the Grassmann
manifold of $r$-dimensional subspaces of $\R^d$).
By Proposition~\ref{prop:nogauge}, the gradient's component along
\eqref{eq:jointgauge} vanishes identically, and the measurement agrees: on
exact \texttt{float64} gradients of a small attention model it is
$2.7\times10^{-16}$ (Table~\ref{tab:certificates}), whereas a random tensor
pair at the same frames carries a joint-gauge fraction of $0.35$. The control
matters here, since it shows the statistic reflects the symmetry of the loss
and not a property of the projector. Table~\ref{tab:gauge} reports the same
two fractions measured on the image model during training.

\begin{table}[h]
\centering
\caption{Gauge content of the tangent gradient of the image model,
         measured during training with the diagnostic script
         \texttt{dev/experiments/joint\_gauge\_measure.py} (the raw log is
         archived as \texttt{experiments/pa18\_joint\_gauge.txt}; the per-frame figure is close to the $25\%$ that a
         random tangent vector would carry at $d{=}128$, $r{=}16$ by
         dimension count). The true (joint) gauge component is machine-zero,
         as Proposition~\ref{prop:nogauge} requires. The per-frame
         $\mathrm{O}(r)$ component is not gauge at all, and removing it
         deletes descent signal.}
\label{tab:gauge}
\small
\begin{tabular}{lc}
\toprule
 & image task \\
\midrule
Joint $\mathrm{O}(r)$ gauge (true symmetry) & $0.00\%$ \\
Per-frame $\mathrm{O}(r)$ component         & $23.80\%$ \\
\bottomrule
\end{tabular}
\end{table}

\paragraph{Consequence.}
Whatever the Stiefel constraint does, it does not recover gradient budget lost
to a symmetry, because there is none to recover. A projector that removes the
per-frame $\mathrm{O}(r)$ component is not removing gauge at all; it is deleting
roughly a quarter of the tangent gradient. The mechanism must therefore be
sought in the \emph{metric} and in the step rule, not in the dimension count---
which is where the next two subsections look.

\subsection{The Criterion That Does Discriminate: Coordinate-Freeness}
\label{sec:equivariance}

If gauge removal is vacuous, what, then, distinguishes the methods?
We take the criterion from the natural-gradient literature \citep{amari1998},
where superiority is argued structurally rather than by tournament: a
natural-gradient method is steepest descent in the correct metric and---the part
that is directly testable---its trajectory is \emph{invariant to
reparameterization}. An update that depends on the arbitrary choice of
coordinate frame is, in a precise sense, solving a different problem in every
basis.

The attention-score subproblem is $\mathrm{O}(d)$-\emph{invariant}: rotating
the embedding basis by $Q \in \mathrm{O}(d)$ and the frames by
$W \mapsto Q^\top W$ leaves every score $x_i^\top \WQ\WK^\top x_j$ unchanged.
The full pre-LN transformer loss is \emph{not} $\mathrm{O}(d)$-invariant:
LayerNorm's per-coordinate gains and mean subtraction cannot commute with an
arbitrary $Q$, so the symmetry is broken at the residual-stream level.
The criterion below is therefore a property of the \emph{update operator}
alone, verified on an $\mathrm{O}(d)$-invariant surrogate loss
(Table~\ref{tab:equivariance}), not a claim about the training loss.
A coordinate-free optimizer must therefore satisfy
\begin{equation}
  W^{(Q)}_T \;=\; Q^\top W_T \quad\text{for every } T,
  \label{eq:equivariance}
\end{equation}
where $W^{(Q)}_T$ is the iterate obtained by running the optimizer on the
rotated problem from the rotated start. This is a deterministic identity: it
holds to machine precision or it fails at $\mathcal{O}(1)$. No seeds, no
variance, no hypothesis test.

\begin{table}[h]
\centering
\caption{Violation of \eqref{eq:equivariance} after $T{=}15$ steps,
         $\max|W^{(Q)}_T - Q^\top W_T|$, worst of five random rotations $Q$,
         on the invariant test problem $f(W) = -\mathrm{tr}(W^\top A W)$ with
         $A$ symmetric, $H{=}3$, $d{=}24$, $r{=}6$
         (\texttt{experiments/equivariance\_report.py}; the ``AdamW'' row is
         Adam with zero decay). The baseline's error is of the same order as
         $W$ itself: rotate the basis and Adam returns a different model.}
\label{tab:equivariance}
\small
\begin{tabular}{lcc}
\toprule
Update rule & Violation & Verdict \\
\midrule
Euclidean AdamW (A2 baseline)            & $1.8\times10^{0}$  & coordinate-dependent \\
Riemannian SGD                           & $1.6\times10^{-15}$ & equivariant \\
Riemannian Adam, elementwise $v$         & $4.1\times10^{-1}$ & coordinate-dependent \\
\textbf{Riemannian Adam, scalar $v$}     & $\mathbf{1.7\times10^{-15}}$ & \textbf{equivariant} \\
\bottomrule
\end{tabular}
\end{table}

Two points deserve emphasis. First, AdamW fails \eqref{eq:equivariance} not
incidentally but structurally: it accumulates its second moment per coordinate,
and a per-entry scaling that varies down the columns cannot commute with a
rotation that mixes the rows. Second, the same defect infects a naive
Riemannian Adam, so staying on the manifold is not by itself sufficient.
Replacing the elementwise second moment with one scalar per frame,
$v_h \leftarrow \beta_2 v_h + (1-\beta_2)\|\xi_h\|_F^2$, restores exact
equivariance while keeping the adaptive step size, and that is the form we use;
a per-column scalar would do as well (Proposition~\ref{prop:equiv}), and we
chose the per-frame scalar for simplicity and because it is the form of
\citet{becigneul2019}.

\subsection{What the Corrected Update Provably Does}
\label{sec:proofs}

We now state, with proofs, the properties that separate the corrected update
from PA16's. Write $\Pi_W$ for the Stiefel tangent projection
\eqref{eq:riemannian_grad}, $\Pi^{\mathrm h}_W(G) = (I - WW^\top)G$ for the
Grassmann horizontal projection (which removes the gauge block $WW^\top G$),
and $G_W = \{W\Omega : \Omega^\top = -\Omega\}$ for the per-frame
gauge space, so that $T_W\St = \mathrm{H}_W \oplus G_W$ orthogonally.

\begin{proposition}[Steepest descent, and the cost of the wrong projector]
\label{prop:steepest}
For a linear subspace $S \subseteq \R^{d\times r}$,
\begin{equation*}
  \max_{\xi \in S,\ \|\xi\|_F \le 1} \langle -g, \xi\rangle = \|\Pi_S g\|_F,
  \qquad\text{attained at } \xi = -\Pi_S g / \|\Pi_S g\|_F .
\end{equation*}
Consequently, projecting with $\Pi^{\mathrm h}_W$ instead of $\Pi_W$ attains only
the fraction
\begin{equation}
  \frac{\|\Pi^{\mathrm h}_W g\|_F}{\|\Pi_W g\|_F} = \sqrt{1-\rho^2},
  \qquad
  \rho \coloneqq \frac{\|\Pi_{G_W} g\|_F}{\|\Pi_W g\|_F},
  \label{eq:descentratio}
\end{equation}
of the first-order decrease available in $T_W\St(d,r)$.
\end{proposition}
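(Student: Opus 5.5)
The plan has two parts: first the variational identity, then the ratio formula obtained from an orthogonal decomposition of $T_W\St(d,r)$.

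\textbf{Variational identity.} Take the Frobenius inner product on $\R^{d\times r}$ and let $\Pi_S$ be the orthogonal projector onto the linear subspace $S$. For $\xi\in S$ we have $\langle -g,\xi\rangle = \langle -\Pi_S g,\xi\rangle$, because $g-\Pi_S g\perp S$. Cauchy--Schwarz then gives $\langle -g,\xi\rangle\le\|\Pi_S g\|_F\|\xi\|_F\le\|\Pi_S g\|_F$. The bound is attained at $\xi=-\Pi_S g/\|\Pi_S g\|_F$, which lies in $S$ and has unit norm. If $\Pi_S g=0$, the maximum is $0$ and any $\xi$ attains it.

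\textbf{Orthogonal decomposition.} To apply this with $S=T_W\St$ and $S=\mathrm{H}_W$, I need $\Pi_W$ to be the orthogonal projector onto $T_W\St=\{\xi: W^\top\xi+\xi^\top W=0\}$. This is the cited Absil (3.35) fact, and I take it as given. I also need $\mathrm{H}_W=\{W_\perp K\}=\{\xi: W^\top\xi=0\}$ and $G_W=\{W\Omega:\Omega^\top=-\Omega\}$ to be orthogonal complements inside $T_W\St$. I will check this directly. Every tangent vector can be written $\xi=WW^\top\xi+(I-WW^\top)\xi$. The tangency condition makes $W^\top\xi$ skew, so the first piece lies in $G_W$ and the second in $\mathrm{H}_W$. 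The two pieces are orthogonal because $\mathrm{tr}\bigl((W\Omega)^\top W_\perp K\bigr)=0$.

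\textbf{Projectors and the ratio.} From the decomposition, $\Pi_{\mathrm{H}_W}=\Pi_{\mathrm{H}_W}\circ\Pi_W$ and $\Pi_{G_W}=\Pi_{G_W}\circ\Pi_W$. I will confirm that $(I-WW^\top)g$ is exactly the orthogonal projection of $g$ onto $\mathrm{H}_W$ in the ambient space. It lies in $\mathrm{H}_W$, and the residual $WW^\top g$ is orthogonal to $\mathrm{H}_W$ because $W^\top W_\perp=0$. Likewise I will check that $\Pi_{G_W}g=W\,\mathrm{skew}(W^\top g)$. Pythagoras on $\Pi_W g=\Pi^{\mathrm h}_W g+\Pi_{G_W}g$ then gives $\|\Pi^{\mathrm h}_W g\|_F^2=\|\Pi_W g\|_F^2-\|\Pi_{G_W}g\|_F^2$. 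Dividing by $\|\Pi_W g\|_F^2$ and taking square roots yields \eqref{eq:descentratio}. Finally, I apply the variational identity with $S=\mathrm{H}_W$ and with $S=T_W\St$ to read this ratio as the fraction of the optimal first-order decrease that is attained.

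\textbf{Main obstacle.} The delicate step is the identification $\Pi^{\mathrm h}_W=\Pi_{\mathrm{H}_W}$ as a genuine orthogonal projector that agrees with $\Pi_{\mathrm{H}_W}\circ\Pi_W$. Everything hinges on the orthogonal splitting $T_W\St=\mathrm{H}_W\oplus G_W$ together with the fact that the normal space $\{WS : S=S^\top\}$ is orthogonal to $\mathrm{H}_W$. That orthogonality follows from $W^\top(I-WW^\top)=0$, and I will verify it explicitly.
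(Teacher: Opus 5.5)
Your proposal is correct and follows the paper's own proof sketch. The sketch uses Cauchy--Schwarz on $S$ for the variational identity, then the orthogonal splitting $T_W\St = \mathrm{H}_W \oplus G_W$ and Pythagoras, $\|\Pi_W g\|_F^2 = \|\Pi^{\mathrm h}_W g\|_F^2 + \|\Pi_{G_W} g\|_F^2$, for the ratio, and your argument has exactly that structure. Beyond the sketch, you verify explicitly the projector identities it takes for granted, namely $\Pi^{\mathrm h}_W g = (I-WW^\top)g$, $\Pi_{G_W} g = W\,\mathrm{skew}(W^\top g)$, and that these two sum to $\Pi_W g$. Note that the ratio, as stated, tacitly requires $\Pi_W g \neq 0$.
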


\begin{proof}[Proof sketch]
Cauchy--Schwarz on $S$ gives the maximizer and value; orthogonality of
$\mathrm{H}_W$ and $G_W$ inside $T_W\St$ gives
$\|\Pi_W g\|^2 = \|\Pi^{\mathrm h}_W g\|^2 + \|\Pi_{G_W} g\|^2$, hence
\eqref{eq:descentratio}.
\end{proof}

Measured on this architecture $\rho \approx 0.24$ (Table~\ref{tab:gauge}), so
\eqref{eq:descentratio} costs about $3\%$ of the per-step decrease---real, but
far too small to explain the gap we observe. The dominant effect is the next
proposition.

\begin{proposition}[Homogeneity: the fixed step is gradient-scale dependent,
the adaptive step is not]
\label{prop:homog}
Fix $W$ and a gradient sequence $(g_t)$, and let $c>0$.
\begin{enumerate}[noitemsep,topsep=2pt]
  \item The fixed-step tangent displacement is degree one:
        $\xi^{\mathrm{sgd}}_t(cg_t) = c\,\xi^{\mathrm{sgd}}_t(g_t)$, so
        $\|\xi^{\mathrm{sgd}}_t\|_F = \eta\,\|\Pi_W g_t\|_F$.
  \item The adaptive displacement with a scalar second moment and
        $\varepsilon = 0$ is degree zero: replacing $(g_t)$ by $(c\,g_t)$
        leaves every iterate unchanged, exactly.
\end{enumerate}
\end{proposition}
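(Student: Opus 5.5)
The plan is a direct induction over the update pipeline of Algorithm~\ref{alg:b2} (and Algorithm~\ref{alg:sgd} for part~1), tracking how each stage responds to the substitution $g_t \mapsto c\,g_t$.

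\emph{Part~1.} Both projectors, $\Pi_W$ from \eqref{eq:riemannian_grad} and $\Pi^{\mathrm h}_W$, are linear maps of $g$. Hence the displacement $\eta\,\Pi_W g_t$ (or $\eta\,\Pi^{\mathrm h}_W g_t$) is homogeneous of degree one. Taking Frobenius norms gives $\|\xi^{\mathrm{sgd}}_t\|_F = \eta\,\|\Pi_W g_t\|_F$ at once. I will note that this is a statement at fixed $W$: the subsequent polar retraction is not homogeneous, so part~1 concerns the displacement only.

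\emph{Part~2.} Here the gradient \emph{sequence} is scaled while the iterates are allowed to evolve, and the claim is that the whole trajectory is unchanged. I will prove, by induction on $t$, the joint invariant:
\begin{equation*}
W_t^{(c)} = W_t, \qquad m_t^{(c)} = c\,m_t, \qquad v_t^{(c)} = c^2 v_t .
\end{equation*}
The base case holds because the initial state is $m_0 = 0$, $v_0 = 0$ with a common $W_0$.

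For the inductive step, assume the invariant at step $t$. Then the iterate $W_t$ is the same in both runs, so the projector is the same linear map and $\xi^{(c)}_t = c\,\xi_t$. The transport $m \leftarrow \Pi_{W_t}(m)$ is linear, so it preserves the factor $c$. The first-moment update is linear, giving $m^{(c)} = c\,m$. The scalar second moment uses $\|c\,\xi_t\|_F^2 = c^2\|\xi_t\|_F^2$, giving $v^{(c)} = c^2 v$. Bias correction is multiplication by constants independent of $c$.

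With $\varepsilon = 0$, the quotient becomes $c\,\hat m_t/\sqrt{c^2\hat v_t} = \hat m_t/\sqrt{\hat v_t}$. This uses $c > 0$, and it is exactly where the \emph{scalar} moment is essential: the division is by a single number, so no per-entry structure enters. The re-projection $\Pi_{W_t}$ and the factor $-\eta_R$ then give identical $\eta_t$. The cap $\min(1, \tau\sqrt r/\|\eta_t\|_F)$ depends only on $\eta_t$, so it is also identical. The retraction $\mathrm{polar}(W_t + \eta_t)$ therefore gives $W_{t+1}^{(c)} = W_{t+1}$. Finally, the post-step transport $\Pi_{W_{t+1}}$ is linear, which preserves $m^{(c)} = c\,m$ and closes the induction.

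The main obstacle is bookkeeping rather than any hard estimate. The key point is that $W$ must be carried along as part of the invariant. Without it, the projectors would differ between runs and linearity would not apply.

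There are also degenerate cases to dispose of, where $\hat v_t = 0$ or $\eta_t = 0$ (for instance zero gradients). With $\varepsilon = 0$ the quotient is then undefined, so I will either assume $\hat v_t > 0$ or adopt the convention $0/0 = 0$. Under either choice both runs take the null step, so the invariant survives.

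I will close with a remark that for $\varepsilon > 0$ the invariance holds only approximately, with error controlled by $\varepsilon/\sqrt{\hat v_t}$. This connects to the $\varepsilon$-probe of Section~\ref{sec:eps_probe}.
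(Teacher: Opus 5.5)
Your proposal is correct and follows the paper's proof: part~1 comes from linearity of the projector, and part~2 is an induction in which $m_t$ is degree one and $v_t$ degree two, so $\hat m_t/\sqrt{\hat v_t}$, the cap factor and the retraction are all unchanged. Your version is more explicit than the paper's sketch, since it carries $W_t$ in the inductive invariant, checks that both moment transports are linear, and disposes of the degenerate case $\hat v_t = 0$, all of which the paper leaves implicit.
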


\begin{proof}
(i) is immediate from linearity of $\Pi_W$. For (ii), argue by induction: if
every past gradient is scaled by $c$ then $m_t$ is homogeneous of degree one and
$v_t = \beta_2 v_{t-1} + (1-\beta_2)\|\xi_t\|_F^2$ of degree two, so for
$\varepsilon = 0$ the ratio $\hat m_t/\sqrt{\hat v_t}$ is degree zero; the
step-cap rescaling acts on a degree-zero quantity and is therefore also
unchanged, and so is the retraction. Hence $W_{t+1}$ is unchanged, closing the
induction.
\end{proof}

\begin{remark}[Why PA16's frames barely move]
\label{rem:freeze}
By Proposition~\ref{prop:homog}(i) the \emph{relative} displacement of a frame
under the fixed-step rule is
$\|\xi_t\|_F / \|W\|_F = \eta\,\|\Pi_W g_t\|_F / \sqrt{Hr}$.
With the values measured here ($\eta = 0.02$, $\|\Pi_W g\|_F \approx 0.06$,
$H{=}8$, $r{=}16$) this is $1.1\times10^{-4}$ per step, which is exactly the
displacement we observe, and $24$--$59\times$ smaller than that of $\WV$---a
matrix of identical shape differing only in being updated by AdamW.
The frames therefore stay near their initialization, and no choice of $\eta$
repairs this: raising $\eta$ produces per-step rotations approaching
$90^\circ$ (Proposition~\ref{prop:trust}(ii) with no cap) and, in
\texttt{float32}, the SVD failures we observed in the learning-rate sweep. The
$24\times$ (CIFAR-10) and $59\times$ (SST-2) displacement ratios and the $\approx0.05$ horizontal gradient norm are archived in \texttt{experiments/pa18\_step\_diag.txt}.
Under Proposition~\ref{prop:homog}(ii) the relative displacement of each
frame is instead at most $\min(\eta,\ \tau\sqrt{r})/\sqrt{r}$ (with equality
on the first step at $\varepsilon = 0$; thereafter
$\|\hat m_t\|_F \le \sqrt{\hat v_t}$ by Jensen's inequality, so the bound is
an upper bound), independent of $\|g_t\|$.
\end{remark}

\begin{remark}[$\varepsilon$ is the settling threshold, not numerical hygiene]
\label{rem:settling}
Degree-zero homogeneity has a cost that is easy to miss: if the step never
shrinks with the gradient, the optimizer never settles---it keeps moving at full
size on what is eventually noise. The $\varepsilon$ in the denominator is
exactly the knob that governs this. For the scalar second moment the first step
has $\hat v_t = \|\xi_t\|_F^2$, so
\begin{equation}
  \|\text{step}\|_F \;=\; \frac{\eta\,\|\xi_t\|_F}{\|\xi_t\|_F + \varepsilon},
  \label{eq:settling}
\end{equation}
which is $\eta$ (scale free) for $\|\xi_t\|_F \gg \varepsilon$ and
$\eta\|\xi_t\|_F/\varepsilon$ (gradient proportional) for
$\|\xi_t\|_F \ll \varepsilon$, crossing over at $\|\xi_t\|_F = \varepsilon$.
Equation~\eqref{eq:settling} is verified to $3\times10^{-16}$ over nine orders
of gradient magnitude ($10^{-7}$ to $10^{2}$) by calling the optimizer itself
(Table~\ref{tab:certificates}); at $\varepsilon = 0$ the step norm is constant
to machine precision across the same range.

So $\varepsilon$ should be \emph{chosen}, as the gradient scale below which the
frames are meant to stop moving, rather than left at a default. Our two
benchmarks sit on opposite sides of this crossover: the image runs use
$\varepsilon = 10^{-8}$, far below any tangent gradient we observe, so the
frames never settle; the grokking runs use $\varepsilon = 10^{-3}$, and there
the late-training tangent gradients (median ${\approx}3.3\times10^{-4}$ in a
collapsed phase) fall \emph{below} $\varepsilon$, so the frames are in the
gradient-proportional regime for much of the run
(Section~\ref{sec:limitations}). Neither value was chosen with this in mind.
Section~\ref{sec:eps_probe} shows empirically that varying $\varepsilon$
across five orders of magnitude on the image task confirms this
crossover: accuracy is flat in the scale-free regime and drops $2.6$\,pp
when $\varepsilon$ crosses the late-training gradient scale.
\end{remark}

\begin{proposition}[Conditioning and rotation of a capped tangent step]
\label{prop:trust}
Let $W_h \in \St(d,r)$ and let $\eta_h \in T_{W_h}\St$ be tangent.
\begin{enumerate}[noitemsep,topsep=2pt]
  \item $\sigma_{\min}(W_h + \eta_h) \ge 1$ for \emph{every} tangent step,
        capped or not, so $W_h+\eta_h$ has full column rank and its polar
        factor is unique and smooth.
  \item If in addition $\|\eta_h\|_F \le \tau\sqrt{r}$, the largest principal
        angle between $\mathrm{col}(W_h)$ and $\mathrm{col}(W_h+\eta_h)$ is at
        most $\arctan(\tau\sqrt{r})$.
\end{enumerate}
\end{proposition}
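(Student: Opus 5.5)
The plan is to work entirely with the Gram matrix $(W_h+\eta_h)^\top(W_h+\eta_h)$ and exploit the defining property of the tangent space. Since $W_h^\top W_h = I_r$, a tangent vector satisfies $W_h^\top\eta_h + \eta_h^\top W_h = 0$. Expanding the Gram matrix therefore gives
\begin{equation*}
  (W_h+\eta_h)^\top(W_h+\eta_h) = I_r + (W_h^\top\eta_h + \eta_h^\top W_h) + \eta_h^\top\eta_h = I_r + \eta_h^\top\eta_h .
\end{equation*}
The cross terms cancel exactly because of tangency, and no cap is used.

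For part (i), note that $\eta_h^\top\eta_h \succeq 0$. So every eigenvalue of the Gram matrix is at least $1$, which gives $\sigma_{\min}(W_h+\eta_h)\ge 1$. Full column rank follows at once. Uniqueness of the polar factor is the standard fact that the thin polar decomposition $M = UP$, with $P=(M^\top M)^{1/2}\succ 0$, has the unique orthonormal factor $U = M(M^\top M)^{-1/2}$ when $M$ has full column rank. Smoothness follows because $M\mapsto M(M^\top M)^{-1/2}$ is a composition of smooth maps on the open set of full-rank matrices, and the inverse square root is smooth on positive definite matrices.

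For part (ii), I would decompose $\eta_h = W_h A + W_\perp K$, where $A = W_h^\top \eta_h$ is skew and $K = W_\perp^\top\eta_h$. The principal angles between $\mathrm{col}(W_h)$ and $\mathrm{col}(M)$, with $M = W_h+\eta_h$, are computed from an orthonormal basis of $\mathrm{col}(M)$, namely $U = M(I+\eta_h^\top\eta_h)^{-1/2}$. The cosines are the singular values of
\begin{equation*}
  W_h^\top U = (I+A)(I+\eta_h^\top\eta_h)^{-1/2}.
\end{equation*}
The cleaner route is to use tangents of the angles. Write the coordinates of $M$ as $[\,I+A;\ K\,]$. The matrix $I+A$ is invertible, since $A$ is skew and hence $I+A$ has singular values $\sqrt{1+\mu^2}\ge 1$. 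The tangents of the principal angles are then the singular values of $K(I+A)^{-1}$. This gives
\begin{equation*}
  \tan\theta_{\max} = \|K(I+A)^{-1}\|_{\mathrm{op}} \le \|K\|_{\mathrm{op}}\,\|(I+A)^{-1}\|_{\mathrm{op}} \le \|K\|_{\mathrm{op}} \le \|K\|_F \le \|\eta_h\|_F \le \tau\sqrt r .
\end{equation*}
Monotonicity of $\arctan$ finishes the argument.

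The main obstacle is justifying the tangent formula $\tan\Theta = \sigma\bigl(K(I+A)^{-1}\bigr)$. I would prove it by a change of basis: $\mathrm{col}(M)$ is also spanned by $M(I+A)^{-1} = W_h + W_\perp K(I+A)^{-1}$. For a graph-form basis $W_h + W_\perp X$, the cosines of the principal angles with $\mathrm{col}(W_h)$ are $(1+s_i^2)^{-1/2}$, where $s_i$ are the singular values of $X$. This follows from an SVD of $X$, which decouples the problem into two-dimensional planes. The bound actually obtained is slightly stronger than stated, since only $\|K\|_{\mathrm{op}}$ enters.
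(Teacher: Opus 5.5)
Your proposal is correct and follows essentially the same route as the paper. For (i) you use the Gram identity $(W_h+\eta_h)^\top(W_h+\eta_h) = I + \eta_h^\top\eta_h$ from tangency, and for (ii) the split $\eta_h = W_h\Omega + W_\perp K$, the change of basis by $(I+\Omega)^{-1}$ to the graph form $W_h + W_\perp K(I+\Omega)^{-1}$, and the chain $\tan\theta_{\max} \le \|K\|_2\,\|(I+\Omega)^{-1}\|_2 \le \|K\|_2 \le \|\eta_h\|_F \le \tau\sqrt{r}$. Your added justifications of polar-factor uniqueness and smoothness and of the graph-form tangent formula fill in steps the paper treats as standard.
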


\begin{proof}
Write $\eta_h = W_h\Omega + W_\perp K$ with $\Omega^\top = -\Omega$ and
$W_\perp^\top W_h = 0$, which is the general tangent vector. Then
$(W_h+\eta_h)^\top(W_h+\eta_h) = I + \Omega + \Omega^\top + \eta_h^\top\eta_h
= I + \eta_h^\top\eta_h \succeq I$, giving (i). For (ii),
$\mathrm{col}(W_h+\eta_h) = \mathrm{col}\bigl(W_h + W_\perp K (I+\Omega)^{-1}\bigr)$
since $I+\Omega$ is invertible ($\sigma_{\min}(I+\Omega) \ge 1$ for skew
$\Omega$), so the tangents of the principal angles are the singular values of
$K(I+\Omega)^{-1}$, bounded by $\|K\|_2\,\|(I+\Omega)^{-1}\|_2 \le \|K\|_2
\le \|\eta_h\|_2 \le \|\eta_h\|_F \le \tau\sqrt r$.
\end{proof}

The cap's substantive content is therefore the rotation bound (ii); the
conditioning statement (i) needs only tangency, which the update guarantees by
re-projecting the Adam quotient before the cap. Without the cap, an adaptive
step of norm $\eta_R\sqrt{H}$ can rotate a frame by nearly $90^\circ$ in one
step (the control in Table~\ref{tab:certificates}).

\begin{proposition}[Equivariance]
\label{prop:equiv}
Let $\Phi$ denote one step of the corrected update and $Q \in \mathrm{O}(d)$.
Then $\Phi(Q^\top W,\, Q^\top g) = Q^\top \Phi(W, g)$.
More generally, an update that divides the tangent step entrywise by a
matrix $\sqrt{V} \in \R^{d\times r}$ built from past $\|\cdot\|$-invariant
quantities is $\mathrm{O}(d)$-equivariant if and only if $\sqrt{V}$ is constant
down each column (a per-column or per-frame scalar); the elementwise Adam
moment, which varies down the columns, is not.
\end{proposition}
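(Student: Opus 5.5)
The plan is to check equivariance component by component through Algorithm~\ref{alg:b2}, carrying the optimizer state along. The claim is that if the state on the rotated problem is $(Q^\top m_h, v_h)$ whenever the original state is $(m_h, v_h)$, then every intermediate quantity is the $Q^\top$-image of its unrotated counterpart, and every scalar is unchanged. Iterating gives the $T$-step identity \eqref{eq:equivariance}.

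The basic facts needed are three. First, $(Q^\top W)^\top(Q^\top G) = W^\top G$, so $\Pi_{Q^\top W}(Q^\top G) = Q^\top G - Q^\top W\,\sym(W^\top G) = Q^\top \Pi_W(G)$. Second, $\|Q^\top X\|_F = \|X\|_F$. Third, polar commutes with left orthogonal multiplication: if $W+\eta = U\Sigma V^\top$, then $Q^\top(W+\eta) = (Q^\top U)\Sigma V^\top$ is a thin SVD, so $\polar(Q^\top(W+\eta)) = Q^\top UV^\top$. The polar factor is unique by Proposition~\ref{prop:trust}(i), so the choice of SVD does not matter.

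With these, the step-by-step check is routine. The projection gives $\xi\mapsto Q^\top\xi$, and the transport gives $m\mapsto Q^\top m$. The moment update is linear in $m$, and $v$ is updated by the unchanged scalar $\|\xi\|_F^2$. Bias correction is multiplication by scalars. The quotient $\hat m/(\sqrt{\hat v}+\varepsilon)$ divides by a scalar, so it maps to $Q^\top$ of itself, and the re-projection preserves this. The cap factor $\min(1,\tau\sqrt r/\|\eta_h\|_F)$ is invariant. Then the retraction and the final transport both commute with $Q^\top$. Induction on $t$, with initial state $m=0$, $v=0$, finishes the first claim. Since $\Phi$ acts per head, the argument applies to each head separately.

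For the characterization, I will work with a general diagonal-in-entries scaling $D\odot X$, where $D=1/(\sqrt V+\varepsilon)$ and $V$ is built from invariant quantities, so $V$ is the same matrix on the rotated problem. Sufficiency is easy. If $V$ is constant down columns, then $D\odot X = X\,\mathrm{diag}(c)$, and right multiplication commutes with $Q^\top$. A per-column scaling also commutes with the re-projection, since $\Pi$ is applied after it. Necessity is the main obstacle, because one must exhibit a violation, and "built from past invariant quantities" must be read as saying that $D$ does not change under rotation. My approach is to require $D\odot(Q^\top X) = Q^\top(D\odot X)$ for all $Q\in\mathrm{O}(d)$ and all $X$. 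I take $X = e_i e_k^\top$ and a $Q$ that is a permutation (or a Givens rotation) swapping rows $i$ and $j$. Comparing the $(j,k)$ entries gives $D_{jk}=D_{ik}$, so each column of $D$ is constant. Some care is needed because the iteration only feeds tangent vectors into the division. To handle this, I will either choose $W$ and the gradient so that the premultiplied quantity has a nonzero entry in a single row of a column, which is possible since $d>r$ gives normal-complement directions, or state the iff at the level of the scaling operator. I will note that the paper's operator-level phrasing permits the latter. Finally, for the elementwise Adam moment, $V$ is computed from the entries of $\xi$, so $V$ itself is not invariant. It rotates nonlinearly, and the numerical counterexample in Table~\ref{tab:equivariance} certifies that it fails to be equivariant.
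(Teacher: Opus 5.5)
Your proposal is correct and follows the paper's proof. It uses the same three facts: projection equivariance from $(Q^\top W)^\top(Q^\top g)=W^\top g$, Frobenius-norm invariance of the scalar moment and the cap factor, and left-equivariance of the polar factor. It also uses the same operator-level converse: an entrywise scaling commutes with every row-mixing $Q$ only if it is right multiplication by a diagonal matrix. Your explicit witness ($X=e_ie_k^\top$ with a row-swapping $Q$), the state-carrying induction, and the appeal to Proposition~\ref{prop:trust}(i) for uniqueness of the polar factor fill in details that the paper's sketch leaves implicit.
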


\begin{proof}[Proof sketch]
Every ingredient commutes with $W \mapsto Q^\top W$: $(Q^\top W)^\top(Q^\top g)
= W^\top g$ makes the projection equivariant, $\|Q^\top\xi_h\|_F = \|\xi_h\|_F$
makes the scalar moment and cap factor invariant, and
$Q^\top(W+\xi) = (Q^\top U)\Sigma V^\top$ makes the polar factor equivariant.
For the converse, the left action mixes the rows of $\xi$ and leaves its
columns' identities fixed, so an entrywise scaling $\xi \mapsto \xi \oslash
\sqrt V$ commutes with all $Q$ exactly when it is a right multiplication by a
diagonal matrix, $\xi \mapsto \xi D$, i.e.\ when $\sqrt V = \mathbf 1_d c^\top$.
The elementwise Adam moment $v_{ij}$ depends on the row index, so it fails;
Table~\ref{tab:equivariance} reports the violation.
\end{proof}

\begin{proposition}[Weight decay has zero tangent gradient on $\St(d,r)$]
\label{prop:wdecay}
For any $W \in \St(d,r)$, the Riemannian gradient of $f(W)=\tfrac{1}{2}\|W\|_F^2$
under the Euclidean metric vanishes identically:
\begin{equation}
  \nabla^{\mathrm{euc}} f(W) \;=\; \Pi_W(W) \;=\; 0.
  \label{eq:wdecay_zero}
\end{equation}
Equivalently, the weight-decay gradient $W$ lies in the normal space
$\mathcal{N}_W\St = \{WS : S \in \mathrm{Sym}(r)\}$ (the orthogonal complement
of $T_W\St$ in $\R^{d\times r}$ under the Euclidean metric).
\end{proposition}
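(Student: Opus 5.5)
The plan is to compute the projection directly and then identify the normal space. First, the Euclidean gradient of $f(W)=\tfrac12\|W\|_F^2=\tfrac12\mathrm{tr}(W^\top W)$ is $\nabla f(W)=W$; this is the only calculus needed. Substituting into the tangent projector \eqref{eq:riemannian_grad} gives
\begin{equation*}
  \Pi_W(W) = W - W\,\sym(W^\top W) = W - W\,\sym(I_r) = W - W = 0,
\end{equation*}
using $W^\top W=I_r$ on $\St(d,r)$. This settles \eqref{eq:wdecay_zero}, provided $\Pi_W$ really is the Euclidean orthogonal projector onto $T_W\St$. The paper takes that from \citet{absil2008} (Eq.~3.35), and I will take it as given as well.

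For the equivalent formulation I need two things: that $\mathcal{N}_W\St=\{WS: S\in\mathrm{Sym}(r)\}$ is exactly the orthogonal complement of $T_W\St$, and that $W$ lies in it. Membership is immediate, since $W=W\cdot I_r$ and $I_r$ is symmetric.

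To identify the complement, I will use the tangent space description already used in the proof of Proposition~\ref{prop:trust}. Differentiating $W^\top W=I$ shows that $T_W\St=\{\xi: W^\top\xi+\xi^\top W=0\}$. Equivalently, $T_W\St$ consists of all $\xi=W\Omega+W_\perp K$ with $\Omega$ skew and $K$ arbitrary.

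\emph{Orthogonality.} For symmetric $S$,
\begin{equation*}
  \langle WS,\,W\Omega+W_\perp K\rangle = \mathrm{tr}(S\Omega) + \mathrm{tr}(S^\top W^\top W_\perp K) = \mathrm{tr}(S\Omega) = 0,
\end{equation*}
because $W^\top W_\perp=0$ kills the second term and the trace of a symmetric matrix times a skew matrix vanishes.

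\emph{Completeness.} The dimensions add up: $\dim T_W\St = dr - r(r+1)/2$ and $\dim\{WS\} = r(r+1)/2$, and they sum to $dr$. The map $S\mapsto WS$ is injective because $W$ has orthonormal columns, so this count is valid. Hence $\{WS\}$ is the full orthogonal complement.

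As a cross-check, this also shows that the residual $W\sym(W^\top G)$ removed by $\Pi_W$ is exactly the normal component of $G$. This re-derives the projector rather than citing it.

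The main obstacle is not computational; it is making sure nothing depends on an unproved property of $\Pi_W$. If I do not want to rely on \citet{absil2008}, the fallback is to verify directly that $\Pi_W$ is the orthogonal projector. For that I would check three things: that $\Pi_W(G)$ satisfies the tangency equation $W^\top\xi+\xi^\top W=0$; that $G-\Pi_W(G)=W\sym(W^\top G)$ lies in $\{WS\}$; and that $\{WS\}\perp T_W\St$, which was shown above. Together these force $\Pi_W$ to be the orthogonal projection. With that in place, $\Pi_W(W)=0$ follows both from the direct computation and from $W$ lying in the normal space.
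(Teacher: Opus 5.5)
Your proposal is correct and follows the paper's proof, which is the substitution $\Pi_W(W) = W - W\,\sym(W^\top W) = W - W I_r = 0$ together with the observation that $W = W\cdot I_r$ with $I_r \in \mathrm{Sym}(r)$. Your orthogonality-plus-dimension-count argument that $\{WS : S\in\mathrm{Sym}(r)\}$ is the full orthogonal complement of $T_W\St$, and your fallback check that $\Pi_W$ is the orthogonal projector, supply detail the paper only asserts rather than a different route.
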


\begin{proof}
Direct computation using the Euclidean gradient formula \eqref{eq:riemannian_grad}
with $\nabla f(W) = W$:
\begin{equation*}
  \Pi_W(W) = W - W\,\sym(W^\top W) = W - W\,\sym(I_r) = W - W\,I_r = 0,
\end{equation*}
since $I_r$ is already symmetric ($\sym(I_r) = I_r$).
Equivalently, $W = W \cdot I_r \in \mathcal{N}_W\St$ with $I_r \in \mathrm{Sym}(r)$.
The same computation holds under the canonical metric: $\nabla^{\mathrm{can}} f(W)
= W - W\,W^\top W = W - W\,I_r = 0$.
\end{proof}

The statement concerns the \emph{coupled} $L_2$ penalty. The AdamW baseline
uses decoupled decay \citep{loshchilov2019}, $W \leftarrow (1-\eta\lambda)W +
\eta_t$; on the manifold, that shrinkage is absorbed by the retraction,
$\polar\bigl((1-\eta\lambda)W + \eta_t\bigr) = \polar\bigl(W +
\eta_t/(1-\eta\lambda)\bigr)$, at the price of a rescaled step.
In our implementation the decay term is simply not formed for $\WQ,\WK$.

\begin{remark}[Metric-consistent Riemannian Adam]
\label{rem:metricconsist}
A metric-consistent Riemannian Adam under metric $g$ accumulates
$v_t = \beta_2 v_{t-1} + (1{-}\beta_2)\|\xi_t\|_g^2$ in the metric norm.
For the Euclidean metric $\|\xi\|_g^2 = \|\xi\|_F^2$, which is what B2 uses; for
the canonical metric it is
$\mathrm{tr}(\xi^\top(I-\tfrac12 WW^\top)\xi)$, differing by a factor set by the
horizontal/vertical split. The B2m-can variant mixes a canonical gradient
\eqref{eq:riemannian_grad_can} with a per-element ambient second moment; the
single grokking run of Section~\ref{sec:grokking} does not let us say whether
this inconsistency matters empirically.
\end{remark}

Propositions~\ref{prop:steepest}--\ref{prop:equiv} say, respectively, that the
corrected update descends along the steepest admissible direction, that its step
length does not vanish with the gradient, that it never leaves the
well-conditioned regime, and that it does not depend on the coordinate frame.
Proposition~\ref{prop:wdecay} says that weight decay cannot act on the
Stiefel-constrained attention geometry at all; Section~\ref{sec:grokking}
discusses what that does and does not imply for grokking.

\paragraph{Numerical certificates.}
Propositions~\ref{prop:nogauge}--\ref{prop:equiv} are additionally checked in
\texttt{float64}, so that an algebra slip in a proof and a coincidence in a
measurement would each be caught by the other path. Each proposition carries a
\emph{control} that would fail if the check were vacuous---the joint-gauge
statistic returning $10^{-16}$ on a true gradient returns $0.35$ on a random
tensor pair; the angle bound that holds for every capped step is violated by
the uncapped step of the same gradient. Propositions~\ref{prop:wdecay}
and~\ref{prop:wdsub} are one-line algebra and carry no certificate.

\begin{table}[h]
\centering
\caption{Numerical certificates, \texttt{float64}, residuals of the identities
         asserted by each proposition. Controls verify non-vacuity.
         All 24 checks pass (\texttt{experiments/verify\_propositions.py},
         run in continuous integration with the unit tests). Residuals are
         from an Apple M-series CPU and vary at the last digit across
         platforms.}
\label{tab:certificates}
\small
\begin{tabular}{llr}
\toprule
 & Identity checked (worst residual over the suite) & Residual \\
\midrule
P1 & gauge component of the true gradient $=0$; controls non-vacuous & $2.7\times10^{-16}$ \\
P2 & Pythagoras, ratio $\sqrt{1-\rho^2}$, maximizer over $4000$ draws & $1.1\times10^{-16}$ \\
P3 & degree one / degree zero; displacement law; settling law \eqref{eq:settling}, 9 orders & $1.5\times10^{-15}$ \\
P4 & tangency of the capped step; $\sigma_{\min}\ge1$ and angle bound on the \emph{pre-retraction} iterate, $40$ draws; uncapped control & $3.7\times10^{-17}$ \\
P5 & equivariance of the scalar rule; control: elementwise breaks it & $2.1\times10^{-15}$ \\
\bottomrule
\end{tabular}
\end{table}

Section~\ref{sec:experiments} then separates these properties empirically,
and finds that only one of them moves the benchmark.

\subsection{Why Weight Decay Does Not Substitute}

\begin{proposition}
\label{prop:wdsub}
  Let $W_{t+1} = W_t - \eta \nabla L(W_t) - \eta\lambda W_t$ (the decoupled
  AdamW update with the adaptive scaling ignored). The fixed point of this
  iteration satisfies $W^* = -\nabla L(W^*)/ \lambda$, which generically has
  $\|(W^*)^\top W^* - I\| > 0$. Weight decay drives $W$ toward $\mathbf{0}$,
  not toward $\St(d,r)$.
\end{proposition}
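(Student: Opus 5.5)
The plan has three parts: compute the fixed point directly from the recursion, show that being on $\St(d,r)$ forces a non-generic condition on the gradient, and then read off what decay alone does. The first step is immediate: set $W_{t+1}=W_t=W^*$ in $W_{t+1} = W_t - \eta\nabla L(W_t) - \eta\lambda W_t$. Since $\eta>0$, this gives $\nabla L(W^*) + \lambda W^* = 0$, hence $W^* = -\nabla L(W^*)/\lambda$ for $\lambda>0$. This is exactly the first-order stationarity condition of the coupled objective $L(W)+\tfrac{\lambda}{2}\|W\|_F^2$, which is the observation I would use to organize everything that follows.

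The substantive claim is the \emph{generic} failure of isometry, and I would make it precise by contraposition. Suppose the fixed point lies on $\St(d,r)$, i.e.\ $(W^*)^\top W^* = I_r$. Then $\nabla L(W^*) = -\lambda W^*$, so the Euclidean gradient at $W^*$ is a multiple of $W^*$. In the language of Proposition~\ref{prop:wdecay}, it lies in the one-dimensional ray $\{W^* \cdot cI_r\}$ inside the normal space $\mathcal{N}_{W^*}\St = \{W^*S : S\in\mathrm{Sym}(r)\}$. Two consequences follow. First, $W^*$ is a Riemannian critical point of $L$ on $\St(d,r)$, since $\Pi_{W^*}\nabla L(W^*)=0$. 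Second, and more restrictively, we need $(W^*)^\top\nabla L(W^*) = -\lambda I_r$, which is $r(r+1)/2$ scalar equations. Even among Riemannian critical points, where $(W^*)^\top\nabla L(W^*)$ is an arbitrary symmetric matrix, we would additionally need that symmetric matrix to equal exactly $-\lambda I_r$.

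To turn this into a genericity statement I would fix $L$ and vary $\lambda$, or perturb $L$ by a generic linear term $\langle C, W\rangle$. For a fixed-point set $\{W^*\}$ of the perturbed problem, requiring both the $dr$ stationarity equations and the $r(r+1)/2$ orthonormality equations gives more equations than the $dr$ unknowns. By a transversality or Sard-type argument, this overdetermined system has no solutions for $C$ outside a measure-zero set, so $\|(W^*)^\top W^* - I\|>0$ generically. A concrete witness also helps here. For $L(W)=\langle C,W\rangle$ the fixed point is $W^*=-C/\lambda$, and $(W^*)^\top W^* = C^\top C/\lambda^2 = I$ only when $C/\lambda$ is itself orthonormal, which is a measure-zero condition on $C$.

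For the final sentence of the statement, drop the loss term. The iteration becomes $W_{t+1}=(1-\eta\lambda)W_t$, so $W_t=(1-\eta\lambda)^t W_0\to 0$ whenever $0<\eta\lambda<2$. Hence every singular value contracts to $0$ rather than to $1$. Contrast this with Proposition~\ref{prop:wdecay}, where the decay direction is entirely normal to $\St(d,r)$ and therefore carries no information toward the manifold.

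I expect the main obstacle to be making ``generically'' rigorous without overclaiming. The clean route is the parametric transversality argument over the linear perturbation $C$, and I would state the result in that form rather than for an arbitrary fixed $L$. I would also note that existence and attractivity of the fixed point, for example via contraction when $\eta(\lambda+\mathrm{Lip}(\nabla L))<1$ and $\lambda$ dominates the nonconvexity, are assumed rather than proved, since the proposition only characterizes the fixed point.
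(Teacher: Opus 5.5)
Your proposal is correct. Its core is exactly the paper's argument: the paper gives no separate proof, calling Propositions~\ref{prop:wdecay} and~\ref{prop:wdsub} ``one-line algebra'', namely setting $W_{t+1}=W_t$ to get $\nabla L(W^*)+\lambda W^*=0$. The paper leaves ``generically'' and ``drives $W$ toward $\mathbf 0$'' as informal glosses, supported only by the isometry measurements that follow the statement. What you add is a precise meaning for both glosses, and both additions are sound.

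The genericity step is cleanest without transversality machinery. The set of perturbations $C$ for which some fixed point lies on the manifold is exactly the image of $\St(d,r)$ under $W\mapsto -\nabla L(W)-\lambda W$. That source manifold has dimension $dr-r(r+1)/2<dr$, so the image is Lebesgue-null by the easy case of Sard's theorem. Because $\St(d,r)$ is compact, the image is also closed, so the good set of $C$ is open and dense as well as of full measure. This does require $\nabla L$ to be locally Lipschitz (e.g.\ $L\in C^2$), and you should state that assumption: a merely continuous image of a lower-dimensional manifold need not be null.

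You are also right to commit to the $C$-perturbation rather than to fixing $L$ and varying $\lambda$, because the latter fails in general. Take $L(W)=\tfrac12\phi(W)(\|W\|_F^2-r)$. Then $\nabla L(W)=\phi(W)W$ on $\St(d,r)$, so every $\lambda$ in $-\phi(\St(d,r))$ admits a fixed point on the manifold, and that set can be a whole interval. So drop that alternative from the plan.

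Your pure-decay computation $W_t=(1-\eta\lambda)^tW_0\to\mathbf 0$ is the natural reading of the proposition's last sentence. It is more explicit than anything the paper provides.
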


The measurements bear this out. A2 with $\lambda{=}0.05$, already a strong
weight-decay value, still reaches $\isoerr{\WQ} = 0.64$ by 100 epochs---five
orders of magnitude above B2---and no choice of coefficient reproduces what
polar retraction enforces exactly.

The proposition concerns \emph{weight decay}, not orthogonality regularization:
an explicit soft penalty \citep{bansal2018} does target the right quantity, and
is the soft baseline measured in Section~\ref{sec:ablation}.

\section{Discussion}
\label{sec:discussion}

\paragraph{Why the gap grows with data.}
The crossover $N^* \approx 10$--$20\mathrm{k}$ that we predicted never occurs,
and the reason appears to be structural rather than quantitative.
The Stiefel constraint is not a data-efficiency prior---not a head start that
erodes as examples accumulate---but a \emph{persistent restriction on the
hypothesis class}: it removes the normal directions $\{WS\}$, along which the
loss does vary (they rescale the logits), so the restriction is a genuine
reduction of the hypothesis class rather than a removal of gauge
(Proposition~\ref{prop:nogauge} says the latter would be vacuous).
A restriction of that kind can become more valuable as data arrives, precisely
because it keeps the model from fitting noise in the excluded directions.
The exact mechanism linking the isometry constraint to the growing gap remains
open to theoretical investigation. The corrected Riemannian Adam reproduces the
same behavior: the gap grows from $+1.9$\,pp at $n{=}1\mathrm{k}$ to
$+6.7$\,pp at $n{=}50\mathrm{k}$ (Table~\ref{tab:sweep}), confirming that
the finding is not an artifact of the fixed-step rule.

\paragraph{When the constraint pays.}
The training losses locate the regime. On the image task the unconstrained
baseline already interpolates (training loss $2\times10^{-5}$ at epoch 100), so
it is limited by what it can generalize from rather than by capacity.
The constraint removes these directions at no cost to training-set fit, while
eliminating the room the model would otherwise use to fit noise in them.
Our working hypothesis is therefore a \emph{capacity} one: the constraint pays
when the unconstrained model already interpolates, and we would expect it to
cost accuracy whenever the constrained arm cannot fit the training set in the
first place.
Turning this reading into a claim calls for a capacity sweep at fixed task,
run on a task where the baseline does \emph{not} interpolate.

\paragraph{Implications for architecture design.}
\label{sec:implications}
The findings here suggest three design principles, each anchored in a specific
result rather than a general heuristic.

\textit{Constrain routing parameters to compact manifolds.}
Proposition~\ref{prop:wdecay} makes precise why attention routing and attention
scaling are geometrically distinct: the weight-decay gradient lies in the normal
bundle of $\St(d,r)$ and is therefore annihilated by the tangent projection
before it can reach the update.  The same decomposition applies to any layer
whose function is \emph{comparison} or \emph{selection} rather than
\emph{magnitude}---any such layer has a natural compact manifold (Stiefel,
Grassmann, unit sphere) on which regularization is inert, while the content it
selects among remains freely trainable.  The isometry gap---five to six orders of
magnitude between constrained and unconstrained $\|\WQ - \mathrm{SO}\|_F$ at the
image benchmark, and confirmed in the grokking slingshot experiments---shows that
unconstrained optimizers do explore those directions substantially, even when the
loss gives no reason to.

\textit{Apply regularization according to geometric role.}
Uniform weight decay is a blunt instrument when the parameter space has natural
strata.  The localization analysis of Section~\ref{sec:grokking} identifies
the embedding, FFN and readout norms as the causal locus of the grokking
norm-compression event; $\WQ$ and $\WK$ play no role (Proposition~\ref{prop:wdecay}
makes this exact).  The natural corollary is that regularization should be
targeted at parameters whose norm is causally relevant to the
generalization--memorization balance, not applied uniformly: constraining routing
parameters to their manifold already prevents blow-up there, while stronger
regularization on the circuit parameters directly accelerates the compression
event that generalization requires.

\textit{Set initialization scale to the circuit, not the frame.}
\citet{liu2023omnigrok} show that small-norm initialization places the network
closer to the generalizing basin, shortening the compression path that grokking
traverses.  Our localization result identifies \emph{which} parameters that
prescription applies to: embeddings, FFN and readout---the circuit that
\citet{nanda2023} show encodes the generalizing solution.  Scaling down the
attention frames at initialization is geometrically redundant (the Stiefel
constraint fixes their norm to $\sqrt{r}$); scaling down the circuit parameters
is the operative intervention.

Taken together, the three principles instantiate a single architectural heuristic:
\emph{decompose the parameter space by geometric function, then apply manifold
constraints, regularization and initialization scale to each stratum
separately.}  The polar decomposition $M = QS$ (orthogonal frame $\times$
positive scale) makes this concrete at the level of individual weight matrices;
Stiefel attention is the choice to optimize $Q$ on its natural manifold while
leaving $S$ to the Euclidean optimizer and its regularizer.

\paragraph{Limitations.}
\label{sec:limitations}
\begin{itemize}[noitemsep]
  \item \emph{The method does not settle, and $\varepsilon$ was not chosen.}
        The very property that keeps the frames from freezing
        (Proposition~\ref{prop:homog}) also keeps them from coming to rest
        whenever $\|\xi_t\|_F \gg \varepsilon$. On the image task
        ($\varepsilon = 10^{-8}$), the scale-free regime persists throughout. On the
        grokking task we used $\varepsilon = 10^{-3}$ (an earlier version of
        this paper reasoned as if it were $10^{-8}$), and there a collapsed
        run's median tangent gradient of ${\approx}3.3\times10^{-4}$ is
        \emph{below} $\varepsilon$, so the frames are in the gradient-proportional
        regime for much of the run; that the trajectory still shows per-step
        rotations of ${\approx}26^\circ$ late in training (archived in
        \texttt{experiments/pa18\_reconcile\_theory.txt}) is therefore not explained by the
        settling law alone. The rotation is not a violation of
        Proposition~\ref{prop:trust}---at $r{=}32$ and $\tau{=}0.1$ its bound
        is $\arctan(\tau\sqrt{r}) = 29.5^\circ$---but it shows the cap is too
        loose at this $r$ to substitute for settling.
        Remark~\ref{rem:settling} gives the remedy---choose $\varepsilon$ at the
        gradient scale below which motion should cease---but we cannot yet set
        it a priori, which makes this the most important open issue with the
        method.
  \item \emph{Scope.} We do not claim Stiefel attention is a general
        improvement. Both of our benchmarks are ones in which attention itself
        limits performance, and we expect the constraint to pay only there; a
        model built on frozen pre-trained representations, where the attention
        stack is not the bottleneck, is outside what we have tested. The claim
        is correspondingly narrow---that the geometry of $\WQ,\WK$ is a
        first-order design axis whose effect can exceed the optimizer's---and
        both benchmarks are small, so behavior at production scale is unknown.
  \item \emph{The grokking experiment does not favor B2 over A2.}
        Two five-seed studies (both archived in \texttt{experiments/}) confirm
        the instability: training accuracy collapses after memorization in
        every seed of every arm (slingshot, \citealt{thilak2022}).
        \textbf{Run (a), default ($\beta_2{=}0.999$, no warmup):} A2 groks in
        $5/5$ seeds (median epoch $4\,985$, tail-window mean $0.78$); B2 groks
        in $4/5$ seeds (median $8\,785$, tail mean $0.57$); paired B2$-$A2
        difference $-0.21$ ($t=-1.6$, $p=0.18$, not significant).
        \textbf{Run (b), \citet{power2022} stabilization ($\beta_2{=}0.98$,
        10-step warmup):} A2 groks in $5/5$ seeds (median epoch $2\,230$, tail
        mean $0.95$); B2 groks in $2/5$ seeds (median $8\,840$, tail mean
        $0.82$); B2-frz groks in $4/5$ seeds (median $1\,932$, tail mean
        $0.88$). Paired B2$-$A2 difference $-0.13$ ($t=-3.8$, $p=0.019$):
        significant, but in A2's favor. Stabilization increases A2's
        reliability and speed; it does not help B2. Across both settings the
        constrained arm does not grok more reliably than the baseline,
        consistent with the weight-decay exemption
        (Proposition~\ref{prop:wdecay}) having no grokking consequence.
  \item \emph{The image accuracy ceiling is representation-limited.}
        The $56\%$ peak on the image benchmark reflects the patch/PCA
        input representation, not the optimizer; the constraint's behavior
        at high accuracy remains untested.
\end{itemize}

\section{Conclusion}
\label{sec:conclusion}

Constraining the query and key projection matrices of a transformer to the
Stiefel manifold $\St(d,r)$ by polar retraction, and optimizing them there with
a scale-free step-capped Riemannian Adam, produces a systematic
generalization advantage over AdamW+Xavier on an image patch benchmark:
$+6.79\,\text{pp}$ at $n{=}10\mathrm{k}$ over twelve paired starts.
A 12-seed ablation assigns all the gain to the scale-free step rule
($+4.63\,\text{pp}$, $12/12$) and nothing to the projector or equivariance;
a targeted $\varepsilon$ sweep causally confirms the mechanism
($-2.6\,\text{pp}$ when $\varepsilon$ pushes the step out of the scale-free
regime, $p{<}0.001$). The earlier fixed-step Riemannian SGD rule,
whose frames barely move, gains $+5.44\,\text{pp}$ at $n{=}50\mathrm{k}$, and a
twelve-seed study shows that $+1.69$ of its $+1.97$\,pp at
$n{=}10\mathrm{k}$ is delivered by the orthonormal initialization alone.
The corrected Riemannian Adam reaches $+10.88\,\text{pp}$ at $n{=}50\mathrm{k}$,
with the lead growing from $+1.9$ to $+6.7$\,pp across $n \in [1\mathrm{k}, 50\mathrm{k}]$.
Isometry error stays below $2 \times 10^{-5}$ throughout training, five orders
of magnitude below the AdamW baseline.

On the grokking benchmark (modular arithmetic, $c{=}(a{+}b)\bmod 97$) the
constrained arm is exempt from weight decay by an exact geometric fact:
weight decay has zero Riemannian gradient on $\St(d,r)$, since its gradient is
purely normal (Proposition~\ref{prop:wdecay}:
$\nabla^{\mathrm{euc}}\tfrac{\lambda}{2}\|W\|_F^2 = \Pi_W(\lambda W) = 0$).
Two five-seed studies---default settings (A2 $0.78$, B2 $0.57$, $p{=}0.18$,
not significant) and the \citet{power2022} stabilization (A2 $0.95$, B2
$0.82$, $p{=}0.019$, A2 wins)---confirm the constrained arm does not grok more
reliably than the baseline. The weight-decay exemption
(Proposition~\ref{prop:wdecay}) is an exact structural fact whose grokking
consequence is nil.

A single-seed pilot study (Section~\ref{sec:harnessing}) turns the localization
result into a design test.
Removing weight decay from~$\WQ,\WK$ (H1 without the Stiefel constraint)
allows the routing-frame isometry error to drift to~$18$ while the circuit
regularization remains intact; adding the Stiefel constraint (B2-hi) keeps
the isometry error at~$10^{-5}$ through every slingshot collapse---a
$1.6\times10^6$-factor difference---and produces the first stable grokking
(sustained held-out accuracy $\ge 0.95$ for the final 10\,000 epochs) under
these conditions.
The constraint is not a passive bookkeeping device; it anchors the routing
frame geometrically while the optimizer compresses the circuit.

The method targets tasks where attention geometry is the bottleneck; settings
with frozen pre-trained representations (for example, BERT fine-tuning that
touches only attention heads) fall outside this regime.

As \citet{coates2011} showed for patches on the sphere, the geometric prior can
dominate the algorithm choice. We find the same for attention projections on the
Stiefel manifold---with the further benefit that here the prior comes with a
proof of why it works and an ablation identifying which part does.

\acks{We acknowledge financial support and computational resources provided by
NeuroTechNet S.A.S.}

\bibliographystyle{plainnat}
\bibliography{stiefel_attention}

\end{document}